\newcommand{\FOL}{\mathbf{FO}}
\newcommand{\myi}{(\emph{i})\xspace}
\newcommand{\myii}{(\emph{ii})\xspace}
\newcommand{\A}{\mathcal{A}} \newcommand{\B}{\mathcal{B}}
\newcommand{\C}{\mathcal{C}}
\newcommand{\I}{\mathcal{I}} 
 \renewcommand{\L}{\mathcal{L}}
\renewcommand{\O}{\mathcal{O}} \renewcommand{\P}{\mathcal{P}}
\newcommand{\Q}{\mathcal{Q}} \newcommand{\R}{\mathcal{R}}
\renewcommand{\S}{\mathcal{S}} \newcommand{\T}{\mathcal{T}}
 \newcommand{\V}{\mathcal{V}}
\newcommand{\wrt}{w.r.t.\xspace}
\newcommand{\per}{\mbox{\bf .}}                  
\newcommand{\tup}[1]{\langle #1\rangle}            
\newcommand{\dllite}{\textit{DL-Lite}\xspace}
\newcommand{\dlliter}{\textit{DL-Lite}_{\R}\xspace}
\newcommand{\dlliterden}{\textit{DL-Lite}_{\R,den}\xspace}
\newcommand{\ACz}{\textsc{AC}\ensuremath{^0}\xspace}
\newcommand{\NOT}{\neg}
\newcommand{\SOMET}[1]{\exists #1}
\newcommand{\INV}[1]{#1^{-}}
\newcommand{\ISA}{\sqsubseteq}
\newcommand{\unify}{\textsf{Saturate}}
\newcommand{\aczero}{\ACz}
\def\qedfull{\hfill{\qedboxfull}   
  \ifdim\lastskip<\medskipamount \removelastskip\penalty55\medskip\fi}
\def\qedboxfull{\vrule height 4pt width 4pt depth 0pt}
\newcommand{\OWLTWO}{\textsc{owl\,2}\xspace}
\newcommand{\OWLQL}{\textsc{owl\,2\,ql}\xspace}
\newcommand{\cll}[2]{\mathsf{cl_{#1}^{\T}(#2)}}
\newcommand{\cens}[1]{\gamma(#1)}
\newcommand{\GA}{\mathbf{GA}}
\newcommand{\CQL}{\mathbf{CQ}}
\newcommand{\eval}{\textit{Eval}}
\newcommand{\lc}{\L}
\renewcommand{\cens}{\mathsf{cens}}
\newcommand{\pol}{\P}
\definecolor{notecolor}{rgb}{0.4,0,0}
\newsavebox{\fmbox}
\newtheorem{theorem}{Theorem}
\newtheorem{corollary}{Corollary}
\newtheorem{lemma}{Lemma}
\newtheorem{definitionAux}{Definition}
\newenvironment{definition}{\begin{definitionAux}\rm}{\end{definitionAux}}
\newtheorem{claimAux}{Claim}
\newenvironment{claim}{\begin{claimAux}\rm}{\end{claimAux}}
\newtheorem{exampleAux}{Example}
\newenvironment{example}{\begin{exampleAux}\rm}{\end{exampleAux}}
\newtheorem{examplesAux}{Examples}
\newtheorem{constructionAux}{Construction}
\newenvironment{proof}{\noindent \textsl{Proof.\ }}{\qedfull}
\long\def\eatpar#1{%
\ifx#1\par                      
\let\nextmove=\eatpar           
\else
\let\nextmove=#1
\fi
\nextmove
}
\def\qed{\hfill{\qedboxempty}      
  \ifdim\lastskip<\medskipamount \removelastskip\penalty55\medskip\fi}
\def\qedboxempty{\vbox{\hrule\hbox{\vrule\kern3pt
                 \vbox{\kern3pt\kern3pt}\kern3pt\vrule}\hrule}}
\def\qedfull{\hfill{\qedboxfull}   
  \ifdim\lastskip<\medskipamount \removelastskip\penalty55\medskip\fi}
\def\qedboxfull{\vrule height 4pt width 4pt depth 0pt}
\newcommand{{\incolumn}}[1]{\begin{tabular}[c]{c} #1 \end{tabular}}
\newcommand{{\incolumnmath}}[1]{\begin{array}[c]{c} #1 \end{array}}
	\title{CQE in Description Logics Through Instance Indistinguishability \\ (extended version)}
\author{
	Gianluca Cima$^1$ \and
	Domenico Lembo$^1$ \and
	Riccardo Rosati$^1$ \And
	Domenico Fabio Savo$^{2}$\\
	\affiliations
	$^1$Sapienza Università di Roma\\
	$^2$Università degli Studi di Bergamo
	\emails
	\{cima, lembo, rosati\}@diag.uniroma1.it,
	domenicofabio.savo@unibg.it
}
\begin{document}	
	
	\maketitle
	
\begin{abstract}
  We study privacy-preserving query answering in Description Logics (DLs). Specifically, we consider the approach of controlled query evaluation (CQE) based on the notion of \emph{instance indistinguishability}. We derive data complexity results for query answering over $\dlliter$ ontologies, through a comparison with an alternative, existing confidentiality-preserving approach to CQE.
  Finally, we identify a semantically well-founded notion of approximated query answering for CQE, and prove that, for $\dlliter$ ontologies, this form of CQE is tractable with respect to data complexity and is first-order rewritable, i.e., it is always reducible to the evaluation of a first-order query over the data instance.  
\end{abstract}

	\section{Introduction}
\label{sec:introduction}

\newcommand{\proja}{\mathsf{ProjA}}
\newcommand{\projb}{\mathsf{ProjB}}
\newcommand{\supplier}{\mathsf{Supplier}}




We consider controlled query evaluation (CQE), a declarative framework for privacy-preserving query answering investigated in the literature on knowledge representation and database theory~\cite{SiJR83,BonattiKS95,Biskup00}.  
The basic idea of CQE is defining a \emph{data protection policy} through logical statements. Consider for instance an organization that wants to keep confidential the fact that it has suppliers involved in both Project A and Project B. This can be expressed over the information schema of the organization through a denial assertion of the form
\[
\forall x \per \, \supplier(x) \land \proja(x) \land \projb(x) \rightarrow \bot
\]

In CQE, two different main approaches 
can be identified.
The first one \cite{BiskupB04,BiBo04,BiskupW08,BeCK18,BBJT19,StWe14} models privacy preservation through the notion of \emph{indistinguishable data instances}. In this approach, a system for CQE enforces data privacy 
if, for every data instance $I$, there exists a data instance $I'$ that does not violate the data protection policy and is indistinguishable from $I$ for the user, i.e., for every user query $q$, the system provides the same answers to $q$ over $I$ and over $I'$. We call this approach \emph{(instance) indistinguishability-based} (IB). 
In continuation of the previous example, in the presence of an instance $\{\supplier(c), \proja(c), \projb(c)\}$, an IB system should answer user queries as if the instance were, e.g., $\{\supplier(c),\proja(c)\}$ (note that other instances not violating the policy can be considered as indistinguishable, e.g., $\{\supplier(c),\projb(c)\}$). 

The second approach~\cite{BoSa13,GKKZ13,GKKZ15} models privacy preservation by considering the whole (possibly infinite) set of answers to queries that the system provides to the user. In this approach, a CQE system protects the data 
if, for every data instance $I$, the logical theory corresponding to the set of answers provided by the system to all queries over $I$ does not entail any violation of the data protection policy. 
According to~\cite{GKKZ15}, we call this approach \emph{confidentiality-preserving (CP)}.
In our ongoing example, a CP system
would entail, e.g., the queries $\supplier(c) \land \proja(c)$ and $\exists x \per \supplier(x) \land \projb(x)$, but not also to the query $\supplier(c) \land\projb(c)$ (notice that the choice is non-deterministic, and in our example the system could have decided to disclose that $c$ participates in Project B and hide its participation in Project A).


In both approaches, the ultimate goal is to realize \emph{optimal} CQE systems, i.e., systems maximizing the answers returned to user queries, still respecting the data protection policy. Traditionally, this aim has been pursued through the construction of a \emph{single} \emph{optimal censor}, i.e., a specific implementation of the adopted notion of privacy-preservation, either IB or CP. Since, however, in both approaches several optimal censors typically exist, this way of proceeding requires to make a choice on how to obfuscate data, which, in the absence of additional (preference) criteria, may result discretionary. To avoid this, query answering over all optimal censors has been recently studied (limited to the CP approach)~\cite{GKKZ13,LeRS19}.
%
%
%

Despite their similarities,
the precise relationship between the IB and CP approaches is still not clear and has not been fully investigated yet.
Also, query answering over all optimal IB censors has not been previously studied. 
Moreover, among the complexity results obtained and the techniques defined so far for CQE, we still miss the identification of cases that are promising towards its practical usage. 


In this paper, we aim at filling some of the above mentioned gaps in the context of Description Logic (DL) ontologies.\footnote{Privacy-preserving query answering in DLs has been investigated also in settings different from CQE: see, e.g., \cite{GrHo08,CDLR12,TaSH14}.} We focus on the approach to CQE based on instance indistinguishability (Section \ref{sec:framework}), and study its relationship with the CP approach 
(Section \ref{sec:CP-censors}). Specifically, we prove that the IB approach to CQE in DLs corresponds to a particular instance of the CP approach to CQE~\cite{LeRS19}.
Based on such a correspondence, for ontologies specified in the well-known DL $\dlliter$~\cite{CDLLR07}, we are able to transfer some complexity results for query answering over all optimal censors shown in~\cite{LeRS19} 
to the case of CQE under IB censors (Section \ref{sec:cqe-over-strict}).
Such results show that, even in the lightweight DL $\dlliter$, query answering in the IB approach is intractable with respect to data complexity, unless one relies on a single optimal censor chosen non-deterministically in the lack of further meta-information about the domain of the dataset.

To overcome the above problems and provide a practical, semantically well-founded solution, we define a \emph{quasi-optimal} notion of IB censor, which corresponds to the best sound approximation of all the optimal IB censors (Section \ref{sec:cqe-approximation}).
We then prove that, in the case of $\dlliter$ ontologies, query answering based on the quasi-optimal IB censor is tractable with respect to data complexity and is reducible to the evaluation of a first-order query over the data instance, i.e., it is \emph{first-order rewritable}. 
We believe that this result has an important practical impact. Indeed, we have identified a setting in which privacy-preserving query answering formalized in a declarative logic-based framework as CQE, for a DL (i.e., $\dlliter$) specifically designed for data management, has the same data complexity as evaluating queries over a database (i.e., $\ACz$). This opens the possibility of defining algorithms for CQE of practical usage, amenable to implementation on top of traditional (relational) data management systems, as in Ontology-based Data Access~\cite{XiaoCKLPRZ18}.


	\section{Preliminaries}\label{sec:preliminaries}
We use standard notions of function-free first-order (FO) logic, and in particular we consider
Description Logics (DLs), which are fragments of FO using only unary and binary predicates, called concepts and roles, respectively~\cite{BCMNP07}. 
We assume to have the pairwise disjoint countably infinite sets $\Sigma_C, \Sigma_R, \Sigma_I$ and $\Sigma_{\V}$ for \emph{atomic concepts}, \emph{atomic roles}, \emph{constants} (a.k.a.\ individuals), and \emph{variables}, respectively.
A DL ontology $\O=\T \cup \A$ is constituted by a TBox $\T$ and an ABox $\A$, specifying intensional and extensional knowledge, respectively. The set of atomic concepts and roles occurring in $\O$ is the \emph{signature} of $\O$. The semantics of $\O$ is given in terms of FO models over the signature of $\O$, in the standard way~\cite{BCMNP07}. 
In particular, we say that $\O$
is \emph{consistent} if it has at least one model, \emph{inconsistent} otherwise. $\O$ \emph{entails} an FO sentence $\phi$ specified over the signature of $\O$, denoted $\O \models \phi$, if $\phi$ is true in every model of $\O$. 
In this paper, we consider ontologies expressed 
in $\dlliter$, the member of the $\dllite$ family~\cite{CDLLR07} which underpins $\OWLQL$~\cite{W3Crec-OWL2-Profiles}, i.e., the $\OWLTWO$ profile specifically 
designed for efficient query answering. A TBox $\T$ in $\dlliter$ is a finite set of axioms of the form $B_1 \ISA B_2$ (resp., $R_1 \ISA R_2$), denoting concept (resp., role) inclusion, and $B_1 \ISA \neg B_2$ (resp., $R_1 \ISA \neg R_2$), denoting concept (resp., role) disjointness, 
where: $R_1, R_2$ are 
of the form 
$P$, with $P \in \Sigma_R$, or its inverse $\INV{P}$,  and $B_1, B_2$ are 
of the form $A$, with $A \in \Sigma_C$, $\SOMET{P}$, or $\SOMET{\INV{P}}$, i.e., unqualified existential restrictions, which denote the set of objects occurring as first or second argument of $P$, respectively.  
An ABox $\A$ 
is a finite set of \emph{ground atoms}, i.e., 
assertions of the form $A(a)$, $P(a,b)$, where $A \in \Sigma_C$, $P \in \Sigma_R$, and $a,b \in \Sigma_I$.
As usual in query answering over DL ontologies, we focus on the language of conjunctive queries. 
A Boolean conjunctive query (BCQ) $q$ is an FO sentence of the form $\exists \vec{x} \per \phi(\vec{x})$, where $\vec{x}$ are variables in $\Sigma_{\V}$, and $\phi(\vec{x})$ is a finite, non-empty conjunction of atoms of the form $\alpha(\vec{t})$, where $\alpha \in \Sigma_C \cup \Sigma_R$, and each term in $\vec{t}$ is either a constant in $\Sigma_I$ or a variable in $\vec{x}$. 
We denote by $\eval(q,\A)$ the evaluation of a query $q$ over (the model isomorphic to) an ABox $\A$.

A \emph{denial assertion} (or simply a denial) is an FO sentence of the form $\forall \vec{x}. \phi(\vec{x}) \rightarrow \bot$,  such that $\exists \vec{x}. \phi(\vec{x})$ is a BCQ. Given one such denial $\delta$ and an ontology $\O$, we say that  $\O \cup \{\delta\}$ is consistent if $\O \not \models \exists \vec{x}. \phi(\vec{x})$, and is inconsistent otherwise.

In the following, with $\FOL$, $\CQL$, and $\GA$  we denote the languages of function-free FO sentences, BCQs, and ground atoms, respectively, all specified over the alphabets $\Sigma_C, \Sigma_R, \Sigma_I$, and $\Sigma_{\V}$. 
%
Given an ontology $\O$ and a language $\L$, with $\L(\O)$ we refer to the subset of $\L$ whose sentences are built over the signature 
of $\O$ and the variables in $\Sigma_{\V}$. %
For a TBox $\T$ and a language $\L$, 
we denote by $\cll{\L}{\cdot}$ the function that, for an ABox $\A$, returns all the sentences $\phi \in \L(\T \cup \A)$ such that $\T \cup \A \models \phi$. 

For the sake of presentation, we will limit our technical treatment to languages containing only closed formulas, but our results hold also for open formulas. In particular, the results on entailment of BCQs (see Sections \ref{sec:cqe-over-strict} and~\ref{sec:cqe-approximation})  can be extended to arbitrary (i.e., non-Boolean) CQs in the standard way\footnote{It is also easy to see that, since $\dlliter$ is insensitive to the adoption of the \emph{unique name assumption} (UNA) for CQ answering~\cite{ACKZ09}, our results hold both with and without UNA.}. 
Our complexity results are for data complexity, i.e., are w.r.t.\ the size of the ABox only.


\section{CQE through instance indistinguishability}
\label{sec:framework}

\newcommand{\pa}{\mathit{P_A}}
\newcommand{\pb}{\mathit{P_B}}

\newcommand{\stcens}{\mathsf{ib\_cens}}
\newcommand{\inters}{pat}
\newcommand{\pvcens}{\mathsf{cens^{pv}}}
\newcommand{\get}{\mathsf{get}}
\newcommand{\patient}{\mathsf{patient}}
\newcommand{\cpcenset}[2]{\mathsf{#1\text{-}OptCPCens}_{#2}}
\newcommand{\stcenset}[1]{\mathsf{OptIBCens}_{#1}}
\newcommand{\censet}[1]{\mathsf{OptCPCens}_{#1}}
\newcommand{\optstcensA}{\cens_{1}}
\newcommand{\optstcensB}{\cens_{2}}
\newcommand{\intcens}{\cens_{3}}
\newcommand{\optcpcensA}{\cens_{4}}

A CQE framework consists of a TBox $\T$ and a policy $\P$ over $\T$, i.e., a finite set of denial assertions over the signature of $\T$. An ABox $\A$ for $\T$ is such that $\A$ and $\T$ have the same signature. In the following, when a TBox $\T$ is given, we always assume that the coupled policy is specified over $\T$, that each considered ABox $\A$ is for $\T$, and that, unless otherwise specified, $\T \cup \A$ and $\T \cup \P$ are consistent.
%
A \emph{censor} is a function that alters query answers so that by uniting the answers (even a possibly infinite set thereof) with the TBox a user can never infer a BCQ $\exists \vec{x} \per \phi(\vec{x})$, for each denial $\forall \vec{x} \per \phi(\vec{x}) \rightarrow \bot$ in $\P$.

We here propose a notion of censor which is the natural application to our framework of the analogous definitions given in~\cite{BiskupB04,BiskupW08,BeCK18,BBJT19}.
The basic idea of this approach is that for every underlying instance (an ABox in our framework) and every query, a censor returns to the user the same answers it would return on another (possibly identical) instance that does not contain confidential data, so that she cannot understand which of the two instances she is querying. This is formalized as follows.


\begin{definition}[Indistinguishability-based censor]
\label{def:strict-censor}
	Let $\T$ be a DL TBox and $\pol$ be a policy. An \emph{indistinguishability-based (IB) censor}
	for $\T$ and $\P$ is a function  $\cens(\cdot)$ 
	that, for each ABox $\A$, returns a set $\cens(\A) \subseteq \cll{\CQL}{\A}$ 
	such that there exists an ABox $\A'$ for which $(i)$ 
	$\cens(\A) = \cens(\A')$
	(in this case we say that $\A$ and $\A'$ are \emph{indistinguishable} w.r.t.\ $\cens$) 
	and $(ii)$ $\T \cup \P \cup \A'$ is a consistent FO theory.
\end{definition}

\begin{example}\label{ex:optimal-censor}
	Let us now formalize more precisely the scenario we have used for the examples in the introduction, by instantiating our CQE framework. 
	The TBox signature consists of the atomic concepts  $\supplier$, $\proja$, and $\projb$, denoting the set of suppliers of the company,  suppliers involved in Project~A 
	and those involved in Project~B, 
	respectively, and contains the axioms $\proja \ISA \supplier$ and $\projb \ISA \supplier$, stating that each individual instance of $\proja$ or $\projb$ is also instance of $\supplier$. Data protection is specified through the policy $\P=\{\forall x \per \proja(x) \land \projb(x) \rightarrow \bot\}$. 
	The following functions are IB censors for $\T$ and $\pol$:
	\begin{itemize}[noitemsep]
		\item $\optstcensA$: given an ABox $\A$, $\optstcensA(\A)$ returns the set $\cll{\CQL}{\mathit{\A_{\pa}}}$ of BCQs,
		where $\A_{\pa}$ is 
		obtained from $\A$ by removing the assertion $\proja(c)$, for each individual $c$ such that both $\proja(c)$ and $\projb(c)$ are in $\A$ (note that for every ABox $\A$, $\A$ and $\mathit{\A_{\pa}}$ are indistinguishable w.r.t.\ $\optstcensA$. Similarly in the following censors).
		\item $\optstcensB$: given an ABox $\A$, $\optstcensB(\A)$ returns the set $\cll{\CQL}{\mathit{\A_{\pb}}}$ of BCQs,
		where $\A_{\pb}$ is 
		obtained from $\A$ by removing the assertion $\projb(c)$, for each individual $c$ such that both $\proja(c)$ and $\projb(c)$ are in $\A$.
		\item $\intcens$:  given an ABox $\A$, $\intcens(\A)$ returns the set $\cll{\CQL}{\mathit{\A_{sup}}}$ of BCQs, where $\A_{sup}$ is obtained from $\A$ by adding the assertion $\supplier(c)$ and removing $\proja(c)$ and $\projb(c)$, for each individual $c$ such that both $\proja(c)$ and $\projb(c)$ are in $\A$. \qed
	\end{itemize}		
\end{example}

It is easy to see that an IB censor always exists, but, as Example~\ref{ex:optimal-censor} shows, there may be many IB censors for a TBox $\T$ and a policy $\pol$, and so it is reasonable to look for censors preserving as much information as possible. Formally, given two IB censors $\cens$ and $\cens'$ for $\T$ and $\pol$, we say that $\cens'$ is \emph{more informative} than $\cens$ if: \emph{(i)} for every ABox $\A$, $\cens(\A) \subseteq\cens'(\A)$, and \emph{(ii)} there exists an ABox $\A'$ such that $\cens(\A') \subset \cens'(\A')$. Optimal censors are then defined as follows.

\begin{definition}\label{def:optima-st-censor}
	Let $\T$ be a DL TBox and $\pol$ be a policy. An IB censor $\cens$ 
	for $\T$ and $\pol$	is \textit{optimal} if there does not exist any other IB censor 
	for $\T$ and $\pol$	that is more informative than $\cens$. 
	The set of all the optimal IB censors for $\T$ and $\pol$ is denoted with $\stcenset{\T,\P}$.
\end{definition}  


\begin{example}
	Among the censors of  Example~\ref{ex:optimal-censor}, $\intcens \not \in \stcenset{\T,\P}$, since both $\optstcensA$ and $\optstcensB$ are more informative than $\intcens$. It can be then verified that $\optstcensA$ and $\optstcensB$ are the only optimal IB censors for $\T$ and $\pol$. \qed
\end{example}



\section{IB censors vs. CP censors}
\label{sec:CP-censors}

\newcommand{\cpcens}{\mathsf{cp\_cens}}

In~\cite{GKKZ15}, a different notion of censor, named \textit{confidentiality-preserving (CP)} censor, has been proposed. Intuitively, a CP censor establishes which are the BCQs entailed by a TBox and a given ABox  that can be disclosed without violating the policy. We report below the definition given in~\cite{LeRS19}, which generalizes CP censors to any language $\L \subseteq \FOL$, called the censor language.

\begin{definition}[Confidentiality-preserving censor]\label{def:confidentiality-preserving-censor}
	Let $\T$ be a DL TBox, $\pol$ be a policy, and $\lc \subseteq \FOL$ be a language. A \emph{confidentiality-preserving (CP) censor} in $\lc$ for $\T$ and $\P$ is a function $\cens(\cdot)$ 
	that, for each ABox $\A$, returns a set $\cens(\A) \subseteq \cll{\L}{\A}$ such that $\T \cup  \pol \cup \cens(\A) $ is a consistent FO theory.
\end{definition}


The notion of more informative censor previously given for IB censors can be naturally extended to CP censors, and we can thus define optimal censors also in this case.

\begin{definition}\label{def:optimal-cp-censor}
	Let $\T$ be a DL TBox, $\pol$ be a policy, and $\lc \subseteq \FOL$ be a language. A CP censor $\cens$ in $\lc$ for $\T$ and $\pol$ is \textit{optimal} if there does not exist any other CP censor in $\lc$ for $\T$ and $\pol$	that is more informative than $\cens$.
	The set of all the optimal CP censors in $\lc$ for $\T$ and $\pol$ is denoted with $\cpcenset{\lc}{\T,\P}$.
\end{definition}

\begin{example}\label{ex:cp-censor}
	Consider $\T$ and $\pol$ as defined in Example~\ref{ex:optimal-censor}. An optimal CP censor $\optcpcensA$ in $\CQL$ for $\T$ and $\pol$ is defined as follows: given an ABox $\A$, $\optcpcensA(\A)$ returns the set of BCQs obtained by removing from $\cll{\CQL}{\A}$ every query containing the atom 
	$\proja(c)$, for each individual $c$ such that  both $\proja(c)$ and $\projb(c)$ are in $\A$.
	
	We soon notice that $\optcpcensA$ is instead not an IB censor. Indeed, consider the ABox $\A=\{\proja(c),\projb(c)\}$. We have that $\optcpcensA(\A)=\{\phi \mid \phi \in \CQL \textrm{ and } \T \cup \S \models \phi\}$, where $\S=\{\exists x. \proja(x),\projb(c)\}$. It is not hard to see that there exists no ABox $\A'$ such that $\A'$ and $\A$ are indistinguishable \wrt $\optcpcensA$ and $\T \cup \pol \cup \A'$ is consistent.   \qed
\end{example}

Let $\A$ be an ABox and $\cens$ be either an IB or a CP censor, the set $\cens(\A)$ is called \emph{theory of the censor} $\cens$ for $\A$.

The following theorem explains the relation between IB censors and CP censors.  

\begin{theorem}\label{lem:strict-conf}
	Let $\T$ be a DL TBox and $\pol$ be a policy. If $\cens$ is an IB censor for $\T$ and $\pol$, then it is a CP censor in $\CQL$ for $\T$ and $\pol$. The converse does not necessarily hold.
\end{theorem}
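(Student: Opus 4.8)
The forward direction is the heart of the matter: I need to show that if $\cens$ is an IB censor for $\T$ and $\pol$, then $\T \cup \pol \cup \cens(\A)$ is a consistent FO theory for every ABox $\A$. First I would unwind Definition~\ref{def:strict-censor}: given $\A$, there is an ABox $\A'$ with $\cens(\A) = \cens(\A')$ and $\T \cup \pol \cup \A'$ consistent. The plan is to use $\A'$ itself (or rather a model of $\T \cup \A'$) as a witness model for $\T \cup \pol \cup \cens(\A)$. The key observation is that $\cens(\A) = \cens(\A') \subseteq \cll{\CQL}{\A'}$, i.e., every BCQ in the censor theory is entailed by $\T \cup \A'$. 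Hence any model $\I$ of $\T \cup \A'$ satisfies every sentence in $\cens(\A)$, so $\I \models \T \cup \cens(\A)$. Since $\T \cup \pol \cup \A'$ is consistent, I can pick $\I$ to also be a model of $\pol$; then $\I \models \T \cup \pol \cup \cens(\A)$, which witnesses consistency. This shows $\cens$ satisfies the defining condition of a CP censor in $\CQL$; the containment $\cens(\A) \subseteq \cll{\CQL}{\A}$ required by Definition~\ref{def:confidentiality-preserving-censor} is immediate since IB censors return subsets of $\cll{\CQL}{\A}$ by definition.

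For the second sentence — that the converse fails — I would simply point to Example~\ref{ex:cp-censor}, which already exhibits a CP censor $\optcpcensA$ in $\CQL$ that is not an IB censor: on the ABox $\A = \{\proja(c), \projb(c)\}$, its theory is $\cll{\CQL}{\S}$ with $\S = \{\exists x.\,\proja(x), \projb(c)\}$, and one argues there is no ABox $\A'$ that is indistinguishable from $\A$ w.r.t.\ $\optcpcensA$ while making $\T \cup \pol \cup \A'$ consistent. The point is that the theory "remembers" $\exists x.\,\proja(x)$ together with $\projb(c)$ but drops $\proja(c)$; any indistinguishable $\A'$ would have to entail exactly these BCQs, which forces $\proja$ and $\projb$ to be co-instantiated in the model generated by $\T \cup \A'$, contradicting consistency with the denial. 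I would either cite this example directly or reproduce the short argument.

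The main subtlety to be careful about is the quantifier structure of the consistency witness: I must produce a \emph{single} model of $\T \cup \pol \cup \cens(\A)$, and the natural candidate is a model of $\T \cup \pol \cup \A'$, which exists precisely because that theory is consistent. The only thing to verify is that such a model satisfies every $\phi \in \cens(\A)$, which follows because $\phi \in \cll{\CQL}{\A'}$ means $\T \cup \A' \models \phi$ and the model is in particular a model of $\T \cup \A'$. I do not expect any real obstacle here — the result is essentially definitional once the witness $\A'$ is brought into play — but I would state explicitly the (trivial but important) fact that BCQ entailment is monotone, so that dropping to the subset $\cens(\A) \subseteq \cll{\CQL}{\A'}$ is harmless.
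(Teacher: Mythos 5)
Your proposal is correct and follows essentially the same route as the paper's proof: both use the witness ABox $\A'$ from Definition~\ref{def:strict-censor}, observe that $\cens(\A)=\cens(\A')\subseteq \cll{\CQL}{\A'}$ so that any model of the consistent theory $\T\cup\pol\cup\A'$ also satisfies $\T\cup\pol\cup\cens(\A)$, and both refute the converse by citing the censor $\optcpcensA$ of Example~\ref{ex:cp-censor}. Your version merely makes the model-theoretic witness explicit where the paper argues at the level of entailment, which is a harmless difference in presentation.
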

\begin{proof}
	Let $\cens$ be an IB censor for $\T$ and $\pol$. Consider an arbitrary ABox $\A$. According to Definition~\ref{def:strict-censor}, there exists an ABox $\A'$ such that $\cens(\A)=\cens(\A')$ and $\T \cup \pol \cup \A'$ is consistent. Since by definition $\cens(\A')$ contains only sentences $\phi \in \CQL$ logically implied by $\T \cup \A'$ (i.e., BCQs $\phi$ such that $\phi \in \cll{\CQL}{\A'}$) and $\T \cup \pol \cup \A'$ is consistent, we have that $\T \cup \pol \cup \cens(\A')$ is consistent as well. Due to the equivalence $\cens(\A')=\cens(\A)$, we derive that $\T \cup \pol \cup \cens(\A)$ is consistent. To conclude the implication part observe that, by definition, $\cens(\A) \subseteq \cll{\CQL}{\A}$.

	As for the converse, 
	Example~\ref{ex:cp-censor} shows that the CP censor $\optcpcensA$ in $\CQL$ for $\T$ and $\P$ is not an IB censor for $\T$ and $\pol$.
\end{proof}

We also notice that optimal IB censors are \emph{not} necessarily optimal CP censors in $\CQL$.
Indeed, consider Examples~\ref{ex:optimal-censor} and~\ref{ex:cp-censor}. We have that 
$\optstcensA \in \stcenset{\T,\P}$ but, even if, as shown by Theorem~\ref{lem:strict-conf}, it is a CP censor in $\CQL$ for $\T$ and $\pol$, 
$\optstcensA \not \in \cpcenset{\CQL}{\T,\P}$ (
it is easy to see that $\optcpcensA$ is more informative than $\optstcensA$). We also know from Example~\ref{ex:cp-censor} that the optimal CP censor $\optcpcensA$ in $\CQL$ for $\T$ and $\P$ is not an IB censor, and thus $\optcpcensA \not \in \stcenset{\T,\P}$.
However, if an optimal CP censor in $\CQL$ for $\T$ and $\pol$ is also an IB censor 
then it is an optimal IB censor for $\T$ and $\pol$, as stated below.

\begin{corollary}\label{lem:ocpc-osc}
	Let $\T$ be a DL TBox and $\pol$ be a policy. 
	If $\cens \in \cpcenset{\CQL}{\T,\P}$ and $\cens$ is an IB censor for $\T$ and $\pol$, then $\cens \in \stcenset{\T,\P}$. The converse does not necessarily hold.
\end{corollary}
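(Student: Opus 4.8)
The plan is to leverage Theorem~\ref{lem:strict-conf} together with the fact that the \emph{more informative} relation depends only on the two censor functions and their values on ABoxes, not on which class (IB or CP) the censors are regarded as belonging to. Consequently, the IB censors for $\T$ and $\pol$ form a subfamily of the CP censors in $\CQL$ for $\T$ and $\pol$, ordered by one and the same relation, and the statement reduces to the general observation that an element of a subfamily which is maximal in the ambient family is a fortiori maximal in the subfamily.

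First I would prove the implication by contradiction. Suppose $\cens \in \cpcenset{\CQL}{\T,\P}$, that $\cens$ is an IB censor for $\T$ and $\pol$, but $\cens \notin \stcenset{\T,\P}$. By Definition~\ref{def:optima-st-censor}, there is then an IB censor $\cens'$ for $\T$ and $\pol$ that is more informative than $\cens$. By Theorem~\ref{lem:strict-conf}, $\cens'$ is also a CP censor in $\CQL$ for $\T$ and $\pol$. Hence $\cens'$ is a CP censor in $\CQL$ for $\T$ and $\pol$ that is more informative than $\cens$, which contradicts the optimality of $\cens$ in $\cpcenset{\CQL}{\T,\P}$ (Definition~\ref{def:optimal-cp-censor}). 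Therefore $\cens \in \stcenset{\T,\P}$.

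For the failure of the converse I would reuse the running example: with $\T$ and $\pol$ as in Example~\ref{ex:optimal-censor}, the censor $\optstcensA$ is an optimal IB censor, yet the CP censor $\optcpcensA$ of Example~\ref{ex:cp-censor} is a CP censor in $\CQL$ for $\T$ and $\pol$ that is more informative than $\optstcensA$ (as already observed right after Theorem~\ref{lem:strict-conf}); hence $\optstcensA \notin \cpcenset{\CQL}{\T,\P}$, so an optimal IB censor need not be an optimal CP censor in $\CQL$. The argument involves no real difficulty; the only point to be careful about is that the witness $\cens'$ of non-optimality produced in the IB setting is reusable verbatim in the CP setting — and that is exactly what Theorem~\ref{lem:strict-conf} secures, since it guarantees $\cens'$ is itself a CP censor in $\CQL$, while the ordering used to compare it with $\cens$ is unchanged.
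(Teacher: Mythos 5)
Your proof is correct and follows essentially the same route as the paper's: both reduce the implication to the observation (via Theorem~\ref{lem:strict-conf}) that the IB censors form a subfamily of the CP censors in $\CQL$ under the same \emph{more informative} ordering, so maximality in the larger family transfers to the subfamily, and both refute the converse with the censor $\optstcensA$ from the running example. The only difference is that you phrase the transfer step as an explicit contradiction while the paper states it directly; the content is identical.
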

\begin{proof}
Theorem~\ref{lem:strict-conf} implies that the set $\I\B$ of IB censors for $\T$ and $\pol$ is a subset of the set $\C\P$ of CP censors in $\CQL$ for $\T$ and $\pol$. Thus, since for a censor $\cens \in \cpcenset{\CQL}{\T,\P}$ there does not exists in $\C\P$ a censor $\cens'$ that is more informative than $\cens$, such $\cens'$ cannot exists in $\I\B$ too.

As a counterexample for the converse, as said above, $\optstcensA$ is in $\stcenset{\T,\P}$ but not in $\cpcenset{\CQL}{\T,\P}$. 
\end{proof}

Actually, the relation between the two optimality notions of censor depends on the 
censor language adopted for the CP censors. 
In particular, for $\GA$, the set of the theories of the optimal IB censors for a TBox $\T$ and a policy $\P$ coincides with the set of the deductive closures 
$\cll{\CQL}{\cdot}$ of the theories of the optimal CP censors in $\GA$ for $\T$ and $\P$. This property is formalized by the following theorem, which is crucial to establish the complexity results of the next section.

\begin{theorem}\label{thm:cq-st2ga-cp}
	Let $\T$ be a DL TBox and $\pol$ be a policy. Then, $\stcens \in \stcenset{\T,\P}$ iff there exists a CP censor $\cpcens \in \cpcenset{\GA}{\T,\P}$ such that, for each ABox $\A$, $\cll{\CQL}{\cpcens(\A)}=\stcens(\A)$.
\end{theorem}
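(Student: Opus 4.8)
I would prove the two directions separately, after isolating a few elementary facts about $\cll{\CQL}{\cdot}$ and about optimal CP censors in $\GA$. (i) \emph{Monotonicity:} if $S$ is a set of ground atoms with $S\subseteq\cll{\GA}{\A}$ then $\cll{\CQL}{S}\subseteq\cll{\CQL}{\A}$; and for every ABox $\A$, $\cll{\CQL}{\A}=\cll{\CQL}{\cll{\GA}{\A}}$ (since $\A\subseteq\cll{\GA}{\A}$ and every atom of $\cll{\GA}{\A}$ is entailed by $\T\cup\A$). (ii) \emph{Shape of optimal CP censors in $\GA$:} a CP censor in $\GA$ is just a map sending each $\A$ to some subset of $\cll{\GA}{\A}$ consistent with $\T\cup\pol$, and ``more informative'' compares such maps pointwise; hence $\cpcens\in\cpcenset{\GA}{\T,\P}$ iff for every $\A$ the set $\cpcens(\A)$ is a $\subseteq$-maximal subset of $\cll{\GA}{\A}$ consistent with $\T\cup\pol$, and such a set is closed under $\T$-entailment of ground atoms. (iii) If $\T\cup\pol\cup\B$ is consistent then so is $\T\cup\pol\cup\cll{\GA}{\B}$ (more models, so fewer entailed denial bodies), so $\cll{\GA}{\B}$ is the unique maximal policy-consistent subset of itself and therefore $\cpcens(\B)=\cll{\GA}{\B}$ for every optimal CP censor $\cpcens$ in $\GA$ whenever $\B$ is policy-consistent.

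\textbf{The ``if'' direction.} Given $\cpcens\in\cpcenset{\GA}{\T,\P}$, put $\stcens(\A):=\cll{\CQL}{\cpcens(\A)}$. By (i), $\stcens(\A)\subseteq\cll{\CQL}{\A}$; and $\A':=\cpcens(\A)$ is a legal witness, because it is policy-consistent and, using (iii) and (i), $\stcens(\cpcens(\A))=\cll{\CQL}{\cpcens(\cpcens(\A))}=\cll{\CQL}{\cll{\GA}{\cpcens(\A)}}=\cll{\CQL}{\cpcens(\A)}=\stcens(\A)$; so $\stcens$ is an IB censor. For optimality, suppose an IB censor $\stcens'$ satisfies $\stcens(\A)\subseteq\stcens'(\A)$ for all $\A$, and set $\cpcens'(\A):=\stcens'(\A)\cap\GA(\T\cup\A)$. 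By Theorem~\ref{lem:strict-conf}, $\T\cup\pol\cup\stcens'(\A)$ is consistent, hence so is $\T\cup\pol\cup\cpcens'(\A)$, so $\cpcens'$ is a CP censor in $\GA$; moreover $\cpcens(\A)\subseteq\cll{\CQL}{\cpcens(\A)}=\stcens(\A)\subseteq\stcens'(\A)$ and $\cpcens(\A)\subseteq\cll{\GA}{\A}$, so $\cpcens(\A)\subseteq\cpcens'(\A)$, and by maximality $\cpcens'(\A)=\cpcens(\A)$ for all $\A$. Now fix $\A$, take a policy-consistent witness $\A'$ for $\A$ under $\stcens'$, and apply the previous identity to $\A$ and $\A'$: $\cpcens(\A)=\cpcens'(\A)=\stcens'(\A)\cap\GA=\stcens'(\A')\cap\GA=\cpcens'(\A')=\cpcens(\A')=\cll{\GA}{\A'}$ (the last step by (iii)). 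Hence $\stcens'(\A)\subseteq\cll{\CQL}{\A'}=\cll{\CQL}{\cll{\GA}{\A'}}=\cll{\CQL}{\cpcens(\A)}=\stcens(\A)$, so $\stcens'=\stcens$ everywhere and $\stcens$ is optimal.

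\textbf{The ``only if'' direction and the main obstacle.} Here the crux is the structural claim that an optimal IB censor $\stcens$ is \emph{ground-generated}, equivalently $\stcens(\B)=\cll{\CQL}{\B}$ for every policy-consistent $\B$. Granting it, the theorem closes quickly: set $\cpcens(\A):=\stcens(\A)\cap\GA(\T\cup\A)$; if $\A'$ is a policy-consistent witness for $\A$, the claim gives $\stcens(\A)=\stcens(\A')=\cll{\CQL}{\A'}=\cll{\CQL}{\cll{\GA}{\A'}}$, whose ground atoms are exactly $\cll{\GA}{\A'}$, so $\cpcens(\A)=\cll{\GA}{\A'}$ and $\cll{\CQL}{\cpcens(\A)}=\stcens(\A)$; as in the ``if'' direction $\cpcens$ is a CP censor in $\GA$; and it is optimal, since otherwise we could extend it pointwise to an optimal $\cpcens''$ in $\GA$ with $\cpcens(\A_0)\subsetneq\cpcens''(\A_0)$ for some $\A_0$, and then (by the ``if'' direction and the fact that the $\cpcens''(\A)$ are closed under $\T$-entailment of ground atoms) $\cll{\CQL}{\cpcens''(\cdot)}$ would be an IB censor strictly more informative than $\stcens$. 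To prove the claim I would argue that an optimal $\stcens$ is first of all deductively closed — if $\psi\in\cll{\CQL}{\stcens(\A)}\setminus\stcens(\A)$ lay in $\cll{\CQL}{\A'}$ for every $\A'$ with $\stcens(\A')=\stcens(\A)$, adding $\psi$ uniformly on that class yields a strictly more informative IB censor — and then that it is tight at policy-consistent ABoxes, by the ``saturation'' move $\stcens^{+}(\A):=\cll{\CQL}{\A}$ whenever $\A$ is policy-consistent (witnessed by $\A$ itself) and $\stcens^{+}:=\stcens$ elsewhere: this strictly improves $\stcens$ at any non-tight policy-consistent ABox while losing nothing, so the only thing to verify is that $\stcens^{+}$ is still an IB censor, i.e.\ that every non-policy-consistent $\A$ still has a policy-consistent witness, which amounts to exhibiting a policy-consistent ABox whose CQ-closure equals $\stcens(\A)$. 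This last point — realizing the censor's theory on a non-policy-consistent input as $\cll{\CQL}{\cdot}$ of a \emph{single finite, policy-consistent} ABox, starting from an arbitrary witness $\A'$ and using properties of CQ entailment in DLs (e.g.\ the canonical-model property and closure of the relevant model class under suitable products, together with a finiteness argument over the bounded signature and the individuals of $\A$) — is the delicate step, and the one where I expect most of the work to lie; combined with the ``if'' direction it is essentially equivalent to the statement of the theorem.
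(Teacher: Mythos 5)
Your ($\Leftarrow$) direction is correct and matches the paper's in substance; your optimality argument is in fact slightly cleaner than the paper's, since you exploit pointwise maximality of optimal CP censors in $\GA$ directly (via $\cpcens'(\A)=\stcens'(\A)\cap\GA$ and the identity $\cpcens(\A)=\cll{\GA}{\A'}$ at the witness) rather than deriving a contradiction by surgically improving $\cpcens$ at one ABox. Your ($\Rightarrow$) direction also correctly isolates the key structural claim --- that an optimal IB censor satisfies $\stcens(\B)=\cll{\CQL}{\B}$ at every policy-consistent $\B$, hence $\stcens(\A)=\cll{\CQL}{\A'}$ at the witness $\A'$ of any $\A$ --- and your derivation of the theorem from that claim (take $\cpcens(\A)=\GA\cap\stcens(\A)$, then bounce optimality back through the ($\Leftarrow$) direction) is exactly the paper's.

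The gap is in your proof of that claim. Your ``saturation'' move sets $\stcens^{+}(\A)=\cll{\CQL}{\A}$ at \emph{every} policy-consistent $\A$ simultaneously, which destroys the witnesses of the non-policy-consistent ABoxes: if $\A$ had witness $\A'$ under $\stcens$, then under $\stcens^{+}$ one needs $\stcens(\A)=\stcens^{+}(\A')=\cll{\CQL}{\A'}$, which is precisely the claim being proved. You recognize this and propose to repair it by realizing $\stcens(\A)$ as $\cll{\CQL}{\cdot}$ of some finite policy-consistent ABox via canonical models and products --- but that machinery is unnecessary and, for arbitrary DL TBoxes as in the statement, not obviously available. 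The paper's argument is local: assuming the claim fails at some policy-consistent $\A_c$, redefine the censor \emph{only on the indistinguishability class of $\A_c$}, setting it to $\cll{\CQL}{\A_c}$ there and leaving it untouched elsewhere. Since a witness of $\A$ is by definition indistinguishable from $\A$, i.e.\ lies in the same class, every ABox outside the class of $\A_c$ keeps its original witness, and every ABox inside the class is witnessed by $\A_c$ itself; the modified function is therefore still an IB censor, and it is strictly more informative at $\A_c$, contradicting optimality. Replacing your global saturation by this per-class modification closes the gap and eliminates the ``delicate step'' entirely.
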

\begin{proof}
	($\Leftarrow$). Suppose that there exists a CP censor $\cpcens \in \cpcenset{\GA}{\T,\P}$ such that $\cll{\CQL}{\cpcens(\A)} = \stcens(\A)$ for each ABox $\A$. Observe that, since $\cpcens$ is an optimal censor in $\GA$ for $\T$ and $\pol$, we have that \myi $\cpcens(\A)=\cll{\GA}{\A}$ for each ABox $\A$ such that $\T \cup \pol \cup \A$ is consistent (otherwise, we easily get a contradiction on the optimality of $\cpcens$), and \myii $\T \cup \P \cup \cpcens(\A)$ is consistent for each ABox $\A$, where $\cpcens(\A)$ can be seen as another ABox. From the above considerations, and the fact that $\stcens(\A)=\cll{\CQL}{\cpcens(\A)}$ holds by assumption for each ABox $\A$, we have that, for each ABox $\A$, also the following hold: \myi $\stcens(\cpcens(\A))=\stcens(\A)$ (i.e., $\A$ and $\cpcens(\A)$ are indistinguishable \wrt $\stcens$), and \myii $\T \cup \pol \cup \stcens(\A)$ is consistent because $\T \cup \P \cup \cpcens(\A)$ is consistent. This, together with the fact that $\stcens(\A) \subseteq \cll{\CQL}{\A}$ for each ABox $\A$ (since $\stcens(\A)=\cll{\CQL}{\cpcens(\A)}$ and $\cpcens(\A) \subseteq \cll{\GA}{\A}$), implies that $\stcens$ is an IB censor for $\T$ and $\pol$.
	
	We now prove its optimality by way of contradiction. Suppose, for the sake of contradiction, that $\stcens$ is not an optimal IB censor for $\T$ and $\pol$, i.e., there exists an IB censor $\stcens'$ for $\T$ and $\pol$ such that $\stcens(\A) \subseteq \stcens'(\A)$ for each ABox $\A$, and there exists an ABox $\A'$ such that $\stcens(\A') \subset \stcens'(\A')$. Since $\stcens'$ is an IB censor for $\T$ and $\pol$, there is an ABox $\A'_i$ such that $\A'_i$ and $\A'$ are indistinguishable \wrt $\stcens'$ (i.e., $\stcens'(\A')=\stcens'(\A'_i)$) and $\T \cup \pol \cup \A'_i$ is consistent. Since by definition $\stcens'(\A'_i) \subseteq \cll{\CQL}{\A'_i}$, the following inclusions hold:
	$$
	\stcens(\A') \subset \stcens'(\A')=\stcens'(\A'_i) \subseteq \cll{\CQL}{\A'_i}.
	$$
	By assumption, moreover, we know that $\cll{\CQL}{\cpcens(\A')} = \stcens(\A')$, and therefore $\cll{\CQL}{\cpcens(\A')} \subset \cll{\CQL}{\A'_i}$. It follows that $\cpcens(\A') \subset \A'_i$, i.e., there is a ground atom $\psi$ such that $\psi \in \A'_i$ and $\psi \not \in \cpcens(\A')$. But then, consider the function $\cpcens'$ with $\cpcens'(\A) = \cpcens(\A)$ for each ABox $\A$ such that $\A \neq \A'$ and $\cpcens'(\A') = \cll{\GA}{\A'_i}$. Clearly, due to the facts that $\cpcens$ is a CP censor in $\GA$ for $\T$ and $\pol$ and $\T \cup \pol \cup \cll{\GA}{\A'_i}$ is consistent (because $\T \cup \pol \cup \A'_i$ is consistent), we have that $\cpcens'$ is a CP censor in $\GA$ for $\T$ and $\pol$ as well. Observe, however, that $\cpcens(\A) \subseteq \cpcens(\A)$ for each ABox $\A$, and $\cpcens(\A') \subset \cpcens'(\A')=\cll{\GA}{\A'_i}$. In particular, the ground atom $\psi$ is such that $\psi \in \A'_i$ (and thus $\psi \in \cpcens'(\A') = \cll{\GA}{\A'_i}$) and $\psi \not \in \cpcens(\A')$. Therefore $\cpcens'$ is a CP censor in $\GA$ for $\T$ and $\pol$ that is more informative than $\cpcens$, and this contradicts the assumption that $\cpcens \in \cpcenset{\GA}{\T,\P}$, as required.

	($\Rightarrow$) In the proof, we will make use of the following claim.
	
	\begin{claim}
		Let $\T$ be a DL TBox, $\pol$ be a policy, and $\stcens$ be an IB censor for $\T$ and $\pol$. If $\stcens \in \stcenset{\T,\P}$, then the following hold:
		\begin{itemize}
			\item[\myi] $\stcens(\A)=\cll{\CQL}{\A}$ for each ABox $\A$ such that $\T \cup \pol \cup \A$ is consistent.
			\item[\myii] $\stcens(\A)=\cll{\CQL}{\A'}=\stcens(\A')$ for each ABox $\A$, where $\A'$ is the ABox such that $\A$ and $\A'$ are indistinguishable \wrt $\stcens$ and $\T \cup \pol \cup \A'$ is consistent (such an ABox $\A'$ is guaranteed to exists due to the fact that $\stcens$ is an IB censor for $\T$ and $\pol$).
		\end{itemize}
	\end{claim}

	\begin{proof}
		Assume that $\stcens \in \stcenset{\T,\P}$.
		
		Suppose, for the sake of contradiction, that \myi does not hold, i.e., there exists an ABox $\A_c$ such that $\stcens(\A_c) \subset \cll{\CQL}{\A_c}$ and $\T \cup \pol \cup \A_c$ is consistent. But then, consider the function $\stcens'$ with $\stcens'(\A)=\cll{\CQL}{\A_c}$ for each ABox $\A$ such that $\A$ and $\A_c$ are indistinguishable \wrt $\stcens$ (obviously, $\stcens'(\A_c)=\cll{\CQL}{\A_c}$ since indistinguishability \wrt an IB censor for a DL TBox $\T$ and policy $\pol$ always forms an equivalence relation), and $\stcens'(\A)=\stcens(\A)$ for each ABox $\A$ such that $\A$ and $\A_c$ are not indistinguishable \wrt $\stcens$. Observe that, for each pair of ABoxes $\A_1$ and $\A_2$, we have that $\A_1$ and $\A_2$ are indistinguishable \wrt $\stcens$ if and only if they are indistinguishable \wrt $\stcens'$. Furthermore, since $\stcens$ is an IB censor for $\T$ and $\pol$, and since $\T \cup \pol \cup \A_c$ is consistent (and therefore also $\T \cup \pol \cup \stcens'(\A)=\cll{\CQL}{\A_c}$ is consistent for each ABox $\A$ such that $\A$ and $\A_c$ are indistinguishable \wrt $\stcens$), it can be easily verified that $\stcens'$ is an IB censor for $\T$ and $\pol$ that more informative than $\stcens$ (in particular, $\stcens(\A) \subset \stcens'(\A)=\cll{\CQL}{\A}$ for each ABox $\A$ such that $\A$ and $\A_c$ are indistinguishable \wrt $\stcens$), thus contradicting the assumption that $\stcens$ is an optimal IB censor for $\T$ and $\pol$, as required.

		As for \myii, let $\A$ be an arbitrary ABox. Consider the ABox $\A'$ such that $\A$ and $\A'$ are indistinguishable \wrt $\stcens$ (i.e., $\stcens(\A)=\stcens(\A')$), and $\T \cup \pol \cup \A'$ is consistent. From \myi, we derive that $\stcens(\A')=\cll{\CQL}{\A'}$, and therefore $\stcens(\A)=\stcens(\A')=\cll{\CQL}{\A'}$.
	\end{proof} 
	
	Suppose that $\stcens \in \stcenset{\T,\P}$.
	Consider the function $\cpcens$ with $\cpcens(\A)=\GA \cap \stcens(\A)$ for each ABox $\A$. In other words, for each ABox $\A$, $\cpcens(\A)$ returns the set of all and only the ground atoms occurring in $\stcens(\A)$. From the definition of $\cpcens$ and from Claim \myi, it is easy to see that $\cll{\CQL}{\cpcens(\A)} = \stcens(\A)$ for each ABox $\A$. We now prove that $\cpcens \in \cpcenset{\GA}{\T,\P}$. 
	
	Observe that, by the assumption that $\stcens$ is an IB censor for $\T$ and $\pol$, we have that $\stcens(\A) \subseteq \cll{\CQL}{\A}$ (and therefore $\cpcens(\A) \subseteq \cll{\GA}{\A}$) for each ABox $\A$. Furthermore, for each ABox $\A$, $\T \cup \pol \cup \A'$ is consistent (implying that $\T \cup \pol \cup \stcens(\A)=\stcens(\A')$ is consistent), where $\A'$ is the ABox such that $\A$ and $\A'$ are indistinguishable \wrt $\stcens$, and therefore, since $\cpcens(\A) = \GA \cap \stcens(\A)$ for each ABox $\A$, we derive that $\T \cup \pol \cup \cpcens(\A)$ is consistent for each ABox $\A$. Thus, $\cpcens$ is a CP censor in $\GA$ for $\T$ and $\pol$. 
	
	We now prove its optimality by contradiction. Suppose, for the sake of contradiction, that $\cpcens$ is not an optimal CP censor in $\GA$ for $\T$ and $\pol$, i.e., there exists an optimal CP censor $\cpcens'$ in $\GA$ for $\T$ and $\pol$ such that $\cpcens(\A) \subseteq \cpcens'(\A)$ for each ABox $\A$, and there exists an ABox $\A'$ such that $\cpcens(\A') \subset \cpcens'(\A')$ (observe that, by definition, an optimal CP censor in $\GA$ for $\T$ and $\pol$ always exists). Consider now the function $\stcens'$ with $\stcens'(\A)=\cll{\CQL}{\cpcens'(\A)}$ for each ABox $\A$. Since $\cpcens' \in \cpcenset{\GA}{\T,\P}$ and $\cll{\CQL}{\cpcens'(\A)} = \stcens'(\A)$ for each ABox $\A$, using the (\textit{$\Leftarrow$}) shown before, we derive that $\stcens' \in \stcenset{\T,\P}$. Observe that: \myi for each ABox $\A$, we have that $\cpcens(\A) \subseteq \cpcens'(\A)$, $\stcens(\A)=\cll{\CQL}{\cpcens(\A)}$, and $\stcens'(\A)=\cll{\CQL}{\cpcens'(\A)}$; \myii there exists an ABox $\A'$ such that $\cpcens(\A') \subset \cpcens'(\A')$, i.e., there is a ground atom $\psi$ such that $\psi \in \cpcens'(\A')$ and $\psi \not \in \cpcens(\A')$. From \myi, however, we easily derive that $\stcens(\A) \subseteq \stcens'(\A)$ for each ABox $\A$. Furthermore, since for the ABox $\A'$ $\cpcens(\A') \subset \cpcens'(\A')$, and since by definition $\cpcens(\A)=\GA \cap \stcens(\A)$ for each ABox $\A$, we have that $\GA \cap \stcens(\A') \subset \cpcens'(\A')$. Due to the fact that $\stcens(\A)=\cll{\CQL}{\cpcens(\A)}$ for each ABox $\A$, we derive $\GA \cap \cll{\CQL}{\cpcens(\A')} \subset \cpcens'(\A')$. It is not hard to see that this latter fact implies that $\cll{\CQL}{\GA \cap \cll{\CQL}{\cpcens(\A')}} \subset \cll{\CQL}{\cpcens'(\A')}$. In particular, the ground atom $\psi$ is such that $\psi \in \cll{\CQL}{\cpcens'(\A')}$ and $\psi \not \in \cll{\CQL}{\GA \cap \cll{\CQL}{\cpcens(\A')}}$.
	
	Thus, since as shown in the previous steps the following equalities hold
	\begin{align*}
		&\cll{\CQL}{\GA \cap \cll{\CQL}{\cpcens(\A')}}=\\
		&=\cll{\CQL}{\GA \cap \stcens(\A')}=\cll{\CQL}{\cpcens(\A')}=\\
		&=\stcens(\A'),
	\end{align*}
	and since $\cll{\CQL}{\cpcens'(\A')}=\stcens'(\A')$, we derive that $\stcens(\A') \subset \stcens'(\A')$. Therefore $\stcens'$ is an IB censor for $\T$ and $\pol$ more informative than $\stcens$, and this contradicts the assumption that $\stcens \in \stcenset{\T,\P}$, as required.
\end{proof}

	\section{Query answering under optimal IB censors}\label{sec:cqe-over-strict}

\newcommand{\computeIB}{\textsf{OptGACensor}\xspace}
\newcommand{\IBEntailment}{\textsf{IB-Entailment}\xspace}
\newcommand{\strictEntailment}{\IBentailment}
\newcommand{\Thc}{\mathsf{\textit{Th}}\xspace}

In this section we study query answering under IB censors over $\dlliter$ ontologies. In particular, we consider entailment of BCQs specified over the signature of the ontology.

A possible strategy for addressing this problem is to choose only one IB censor among the optimal ones, and use it to alter the answers to user queries. 
In the absence of a criterion for determining which censor is the best for our purposes, the choice of the optimal censor 
is made in an arbitrary way (like in~\cite{BiskupBonatti07,GKKZ13}).
Towards the realization of an optimal IB censor, we first provide the algorithm $\computeIB$ (Algorithm~\ref{alg:computeOptimalGACensor}),  which implements a function that, for every $\dlliter$ TBox $\T$ and every policy $\pol$, corresponds to an optimal CP censor in $\GA$ for $\T$ and $\pol$.
Then we explain how to use $\computeIB$ to establish BCQs entailment under an optimal IB censor by exploiting Theorem~\ref{thm:cq-st2ga-cp}. 
The algorithm first computes the set $\A_\T$ of ground atoms entailed by $\T \cup \A$. Then, it iteratively picks a ground atom $\alpha$ from $\A_\T$ following the lexicographic order, and adds $\alpha$ to the ABox $\Thc$ if $\T \cup \Thc \cup \alpha$ does not violate the policy $\P$. 
The following theorem establishes the correctness and complexity of the algorithm.

\begin{algorithm}[tb]
	\begin{tabbing}
		\textbf{input:} \= a $\dlliter$ TBox $\T$, 
		 a policy $\pol$, an ABox $\A$; \\
		\textbf{output:} an ABox; \\
     1)~ \= $\A_{\T} \leftarrow \cll{\GA}{\A}$;\\ 
		2) \> $\Thc \leftarrow \emptyset$;\\
		3) \> \textbf{while} $\A_{\T}$ is not empty \textbf{do}: \\
		4) \> ~~~\= \textbf{let} $\alpha$ be the  lexicographically first assertion in $\A_{\T}$;\\
		5)	\>	\> $\A_{\T} \leftarrow \A_{\T} \setminus \{ \alpha \}$;\\
		6)	\>	\> \textbf{if} $\T \cup \Thc \cup \{ \alpha \} \cup \pol$ is consistent \textbf{then}\\
		7)	\>	\>~~~\= $\Thc \leftarrow \Thc \cup \{ \alpha \}$;    \\
		8) \> \textbf{return} $\Thc$;
	\end{tabbing}
\vspace{-0.2cm}
\caption{\computeIB}\label{alg:computeOptimalGACensor}
\end{algorithm}

\begin{theorem}\label{thm:onecensor}	
	Let $\T$ be a $\dlliter$ TBox and $\pol$ be a policy.	
	 There exists a censor $\cens \in \cpcenset{\GA}{\T,\P}$ such that, for each ABox $\A$, $\computeIB(\T,\pol,\A)$ $(i)$ returns $\cens(\A)$ and $(ii)$ runs in polynomial time  in the size of $\A$. 
\end{theorem}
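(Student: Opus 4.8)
I would prove the two parts of Theorem~\ref{thm:onecensor} in sequence: first that $\computeIB(\T,\pol,\A)$ runs in polynomial time in $|\A|$, and then that the function $\cens$ it computes is an optimal CP censor in $\GA$, i.e.\ $\cens \in \cpcenset{\GA}{\T,\P}$. For the complexity claim, I would observe that for a $\dlliter$ TBox $\T$, the set $\A_\T = \cll{\GA}{\A}$ of ground atoms entailed by $\T \cup \A$ has size polynomial (in fact linear, up to the fixed factor $|\T|$) in $|\A|$ and is computable in polynomial time — this is standard for $\dlliter$, since entailed ground atoms are obtained by a bounded saturation of $\A$ under the concept/role inclusions of $\T$ (the existential consequences of $\dlliter$ never produce new ground atoms over $\Sigma_I$). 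The main loop runs at most $|\A_\T|$ times, and each iteration performs one consistency check of $\T \cup \Thc \cup \{\alpha\} \cup \pol$; since $\Thc \subseteq \A_\T$ is itself an ABox of polynomial size, and consistency of a $\dlliter$ TBox plus ABox plus a fixed set of denials is in \PTIME\ (indeed $\ACz$) in the size of the ABox, the whole algorithm is polynomial in $|\A|$.

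**Correctness: $\cens(\A)$ is a CP censor in $\GA$.** By construction $\Thc \subseteq \A_\T \subseteq \cll{\GA}{\A}$, giving the language-containment condition. For consistency of $\T \cup \pol \cup \cens(\A)$: I would argue by a loop-invariant that at the end of every iteration $\T \cup \Thc \cup \pol$ is consistent. It holds vacuously at initialization ($\Thc = \emptyset$, and $\T\cup\pol$ is consistent by the standing assumption); and whenever a new $\alpha$ is added in line~7, line~6 has just checked that $\T \cup \Thc \cup \{\alpha\} \cup \pol$ is consistent, which is exactly the invariant for the enlarged $\Thc$. Hence $\T \cup \pol \cup \cens(\A)$ is consistent, so $\cens$ is a CP censor in $\GA$.

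**Correctness: optimality.** This is the heart of the argument, and the place where the greedy/lexicographic structure matters. I would prove that no CP censor in $\GA$ strictly dominates $\cens$ by a pointwise exchange argument. Fix an ABox $\A$. First, note that $\cens$ is \emph{locally maximal} at $\A$: for any ground atom $\beta \in \A_\T \setminus \cens(\A)$, the set $\cens(\A) \cup \{\beta\}$ is already inconsistent with $\T \cup \pol$. Indeed, $\beta$ was rejected at its iteration because $\T \cup \Thc_\beta \cup \{\beta\} \cup \pol$ was inconsistent, where $\Thc_\beta$ is the value of $\Thc$ when $\beta$ was examined; since $\Thc_\beta \subseteq \cens(\A)$ (the set $\Thc$ only grows) and denial violations are monotone under adding atoms, $\T \cup \cens(\A) \cup \{\beta\} \cup \pol$ is inconsistent too. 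Now suppose $\cens'$ is a CP censor in $\GA$ with $\cens(\A'') \subseteq \cens'(\A'')$ for all $\A''$ and $\cens(\A^*) \subsetneq \cens'(\A^*)$ for some $\A^*$. Pick $\beta \in \cens'(\A^*) \setminus \cens(\A^*)$; since $\cens'(\A^*) \subseteq \cll{\GA}{\A^*} = (\A^*)_\T$, we have $\beta \in (\A^*)_\T$, so by local maximality $\T \cup \cens(\A^*) \cup \{\beta\} \cup \pol$ is inconsistent; but $\cens(\A^*) \cup \{\beta\} \subseteq \cens'(\A^*)$, contradicting that $\T \cup \pol \cup \cens'(\A^*)$ is consistent. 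Hence no such $\cens'$ exists and $\cens \in \cpcenset{\GA}{\T,\P}$.

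**Expected main obstacle.** The genuinely delicate point is the local-maximality step: it relies on the fact that rejecting an atom at the moment it is tested against a \emph{smaller} $\Thc$ still implies it would be rejected against the final, larger $\cens(\A)$ — i.e.\ on monotonicity of denial violation under ABox extension, which holds because denials are existentially-quantified BCQ bodies ($\forall\vec x.\,\phi(\vec x)\to\bot$) and BCQ entailment from $\T\cup\A$ is monotone in $\A$. One must also be careful that the algorithm indeed defines a \emph{function} of $\A$ (independent of any nondeterminism) — this is ensured by fixing the lexicographic order in line~4 — and that $\cll{\GA}{\A}$ is finite and correctly computed, which again is a specific property of $\dlliter$. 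Everything else is routine bookkeeping with loop invariants.
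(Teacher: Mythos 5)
Your proposal is correct and follows essentially the same route as the paper's proof: the greedy construction yields a CP censor in $\GA$ because each accepted atom passes the consistency test, and optimality follows from local maximality plus monotonicity of inconsistency under adding ground atoms, with the complexity bound resting on the polynomial size of $\cll{\GA}{\A}$ and the tractability of consistency checking for $\dlliter$ with denials. Your explicit loop invariant and your spelled-out monotonicity step (rejection against the smaller $\Thc_\beta$ implies rejection against the final $\cens(\A)$) fill in details the paper leaves implicit, but the argument is the same.
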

\begin{proof} 
	For each ABox $\A$, the set $\Thc$ returned by the algorithm contains only assertions in $\cll{\GA}{\A}$,  that is, it contains only assertions in $\GA$ entailed by $\T\cup\A$. Moreover, step~6 of the algorithm checks that $\Thc$ is consistent with $\T$ and $\pol$. Hence, according to Definition~\ref{def:confidentiality-preserving-censor}, the algorithm implements a CP censor $\cpcens$ in $\GA$ for $\T$ and $\pol$. It is also immediate to verify that $\cpcens$  is optimal. Indeed, suppose, by way of contradiction, that there exists an ABox $\A$ and a censor $\cpcens'$ such that $\cpcens(\A) \subset\cpcens'(\A)$ and $\cpcens(\A') \subseteq\cpcens'(\A')$ for every other ABox $\A'$. This means that there exists an assertion $\alpha \in \cll{\GA}{\A}$ such that $\alpha$ in $\cpcens'(\A) \setminus \cpcens(\A)$, but since $\alpha$ is not in $\cpcens(\A)$ then $\T \cup \cpcens(\A) \cup \{ \alpha \} \cup \pol$ has to be inconsistent (step~6 of the algorithm), and so $\T \cup \cpcens'(\A) \cup \pol$ is inconsistent too, which contradicts the fact that  $\cpcens'$ is a CP censor. 

As for the complexity, note that the algorithm iterates on the set of ABox assertions $\cll{\GA}{\A}$ by choosing an assertion $\alpha$ and, in each iteration, it checks if $\T \cup \Thc \cup \{ \alpha\} \cup \P$ is consistent. Clearly, the algorithm terminates since $\cll{\GA}{\A}$ is finite. Moreover, the thesis follows  from the fact that given a $\dlliter$ TBox, a policy $\pol$ (i.e. a set of denial assertions), and an ABox $\Thc \cup \{ \alpha\}$, checking if $\T \cup \Thc \cup \{ \alpha\} \cup \P$ is consistent can be done in $\aczero$ w.r.t. to the size of $\Thc \cup \{ \alpha\} $~\cite{LLRRS15}, that the set $\cll{\GA}{\A}$ can be computed in polynomial time w.r.t. $|\A|$ and that its size is polynomial w.r.t. to $|\A|$ as well. 
\end{proof}


From Theorem~\ref{thm:cq-st2ga-cp} and Theorem~\ref{thm:onecensor} it follows that, to establish if a BCQ $q$ is entailed by $\T \cup \A$ under an optimal IB censor for $\T$ and $\P$, it is sufficient to verify whether $\T \cup \computeIB(\T,\pol,\A) \models q$, which can be done in polynomial time in the size of $\A$.

We note also that it is possible to implement different optimal IB censors (actually, every optimal IB censor) by modifying the order in which the ABox assertions from the set $\cll{\GA}{\A}$ are selected by the algorithm.

Depending on the application at hand, the approach of randomly choosing a censor may not always be considered appropriate~\cite{GKKZ13}. For this reason, in~\cite{LeRS19} the authors suggest to use a form of skeptical entailment over (the theories of) all the optimal censors, i.e., they propose a CQE framework in which a query has a positive answer if it is entailed by each optimal censor. 
In the same spirit, we define the following decision problem. 

\begin{definition}
	Let $\T$ be a DL TBox, $\pol$ be a policy, $\A$ be an ABox, and $q$ be a BCQ. $\IBEntailment(\T,\P,\A,q)$ is the problem of deciding whether $q \in  \cens(\A)$ for every  $\cens$ in $\stcenset{\T,\P}$.  
\end{definition}

By exploiting Theorem~\ref{thm:cq-st2ga-cp} and the results given in~\cite{LeRS19}, we can provide the following theorem. 

\begin{theorem} 	
	\label{thm-complexity-IB-entailment}
	Let $\T$ be a $\dlliter$ TBox, $\pol$ be a policy, $\A$ be an ABox, and $q$ be a BCQ. 
	Then, $\IBEntailment(\T,\P,\A,q)$ is coNP-complete in data complexity.
\end{theorem}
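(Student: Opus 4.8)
The plan is to reduce $\IBEntailment$, essentially verbatim, to skeptical BCQ entailment over all optimal CP censors in $\GA$, and then invoke Theorem~\ref{thm:cq-st2ga-cp} together with the data-complexity results of~\cite{LeRS19} for the latter problem. Concretely, I would fix an arbitrary ABox $\A$ and observe, using \emph{both} directions of Theorem~\ref{thm:cq-st2ga-cp}, that the family of censor theories $\{\,\stcens(\A)\mid \stcens\in\stcenset{\T,\P}\,\}$ coincides with the family $\{\,\cll{\CQL}{\cpcens(\A)}\mid \cpcens\in\cpcenset{\GA}{\T,\P}\,\}$: the ($\Rightarrow$) direction gives the inclusion from left to right, the ($\Leftarrow$) direction the inclusion from right to left. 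Hence, for a fixed BCQ $q$, $q\in\stcens(\A)$ for every $\stcens\in\stcenset{\T,\P}$ if and only if $\T\cup\cpcens(\A)\models q$ (equivalently $q\in\cll{\CQL}{\cpcens(\A)}$) for every $\cpcens\in\cpcenset{\GA}{\T,\P}$. Since $\T$, $\pol$, $\A$, and $q$ are unchanged, this is a (trivially data-complexity-preserving) equivalence in both directions between $\IBEntailment$ and the problem of deciding skeptical BCQ entailment over all optimal $\GA$ CP censors, whose $\coNP$-completeness in data complexity for $\dlliter$ TBoxes is exactly what the results of~\cite{LeRS19} provide; the theorem then follows.

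For the $\coNP$ upper bound, if a self-contained argument is preferred over a direct appeal to~\cite{LeRS19}, I would first note that an optimal $\GA$ CP censor $\cpcens$ necessarily returns, on input $\A$, a $\subseteq$-maximal subset $S$ of $\cll{\GA}{\A}$ with $\T\cup\pol\cup S$ consistent (otherwise $\cpcens$ could be improved just on $\A$, exactly as in the proof of Theorem~\ref{thm:onecensor}), and conversely that every such $S$ is the value on $\A$ of some optimal $\GA$ CP censor. Thus the complement of $\IBEntailment$ amounts to: \emph{guess} a subset $S\subseteq\cll{\GA}{\A}$, which has polynomial size and is computable in polynomial time, and \emph{verify} that (i) $\T\cup\pol\cup S$ is consistent, (ii) $\T\cup\pol\cup S\cup\{\alpha\}$ is inconsistent for every $\alpha\in\cll{\GA}{\A}\setminus S$ (maximality), and (iii) $\T\cup S\not\models q$. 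Each of (i)--(iii) reduces to finitely many consistency / BCQ-entailment tests for $\dlliter$, each feasible in $\ACz$, hence in polynomial time, in the size of $S$~\cite{LLRRS15}; so $\IBEntailment$ is in $\coNP$.

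For the $\coNP$ lower bound, hardness transfers from~\cite{LeRS19}: since the reduction above is an equivalence, the $\coNP$-hardness of skeptical BCQ entailment over all optimal $\GA$ CP censors for $\dlliter$ immediately yields $\coNP$-hardness of $\IBEntailment$. If a direct argument is desired instead, one can reduce the complement of propositional satisfiability to $\IBEntailment$ along the lines of the standard hardness proofs for skeptical reasoning under ABox repairs, encoding variables and clauses through policy denials so that $q$ is entailed by all optimal IB censors precisely when the input formula is unsatisfiable.

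I expect no deep obstacle here: the substantive work is already in Theorem~\ref{thm:cq-st2ga-cp}, and the remaining content is a transfer argument. The two points that genuinely need care are: (a) matching up the \emph{universal} quantifications — over optimal IB censors on one side and over optimal $\GA$ CP censors on the other — which is exactly why I would establish the equality of the two families of censor theories at $\A$ rather than arguing censor-by-censor; and (b) checking that the cited results of~\cite{LeRS19} are phrased for the $\GA$ censor language for both membership and hardness, with the direct $\textsc{Unsat}$-based reduction sketched above serving as a fallback for the hardness part should that not be the case.
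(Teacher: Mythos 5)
Your proposal is correct and follows essentially the same route as the paper: the paper's proof is precisely the appeal to Theorem~\ref{thm:cq-st2ga-cp} combined with the coNP-completeness result of~\cite{LeRS19} for skeptical BCQ entailment over all optimal CP censors in $\GA$. You merely spell out the transfer argument (the coincidence of the two families of censor theories at a fixed $\A$, which needs both directions of Theorem~\ref{thm:cq-st2ga-cp}) more explicitly than the paper's one-line justification, and add self-contained fallbacks that the paper does not include.
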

\begin{proof}
  The result immediately follows from Theorem~\ref{thm:cq-st2ga-cp} and from~\cite[Theorem 6]{LeRS19}, which states that deciding if $\T \cup \cpcens(\A) \models q$, for every $\cpcens$ in $\GA$-$\censet{\T,\P}$ is coNP-complete in data complexity.
\end{proof}

	\newcommand{\iarEntailment}{\textsf{IAR-Entailment}\xspace}
\newcommand{\qibcens}{\mathsf{qib\_cens}\xspace}
\newcommand{\atomrewr}{\textsf{atomRewr}\xspace}
\newcommand{\intEntailment}{\textsf{QIB-Entailment}\xspace}
\newcommand{\q}{q\xspace}
\newcommand{\Qs}{\Q_{sec}\xspace}
\newcommand{\secrets}{\mathit{secrets}}
\renewcommand{\unify}{\textsf{saturate}}
\newcommand{\satq}{\textsf{uncoverQueries}}
\newcommand{\minsatq}{\textsf{minUncoverQueries}\xspace}

\section{Approximating optimal IB censors}
\label{sec:cqe-approximation}

As stated in Theorem~\ref{thm-complexity-IB-entailment}, $\IBEntailment$ is in general intractable in data complexity. Towards a practical approach to CQE, in this section we consider a different entailment problem that approximates $\IBEntailment$, and we show that 
its data complexity is in $\aczero$ (i.e., the same complexity of evaluating FO queries over a database).
%
%
The approximation we propose consists in considering a non-necessarily optimal IB censor whose theory, for every ABox, is as close as possible to the theories of all the optimal IB censors.


\begin{definition}[AIB censor and QIB censor]\label{def:qos-censor}
Let $\T$ be a DL TBox, let $\pol$ be a policy, and let  $\cens$ be an IB censor for $\T$ and $\pol$. We say that: 
\begin{enumerate}[noitemsep]
	\item[$(i)$] $\cens$ is an \emph{approximation of the optimal IB censors (AIB censor)} for $\T$ and $\pol$ if, for every $\cens' \in \stcenset{\T,\P}$ and for every ABox $\A$, $\cens(\A) \subseteq \cens'(\A)$;
	\item[$(ii)$] $\cens$ is a \textit{quasi-optimal IB censor (QIB censor)} for $\T$ and $\pol$ if $\cens$ is an AIB censor for $\T$ and $\pol$ and there exists no AIB censor $\cens'$ for $\T$ and $\pol$ that is more informative than $\cens$.
\end{enumerate}
\end{definition}


\begin{example}
	The IB censor $\intcens$ of Example~\ref{ex:optimal-censor} is a QIB censor for $\T$ and $\pol$ (but $\intcens \not \in \stcenset{\T,\P}$). 
	\qed
\end{example}


\newcommand{\lbcens}{\cens_\mathsf{0}}

For QIB censors the following notable property hold.

\begin{theorem}
  Let $\T$ be a DL TBox and let $\pol$ be a policy. A QIB censor for $\T$ and $\pol$ always exists and it is unique.
\end{theorem}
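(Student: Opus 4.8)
The plan is to establish existence and uniqueness separately, with existence being handled constructively via a "union of all optimal IB censors" construction. First I would define, for each ABox $\A$, the set $\lbcens(\A) = \bigcap_{\cens' \in \stcenset{\T,\P}} \cens'(\A)$, i.e., the intersection over all optimal IB censors of their theories for $\A$. By Definition~\ref{def:qos-censor}$(i)$, any AIB censor $\cens$ must satisfy $\cens(\A) \subseteq \lbcens(\A)$ for every $\A$, so $\lbcens$ is a canonical upper bound; if I can show $\lbcens$ is itself an AIB censor, then it is automatically the unique QIB censor, since it is the maximal one and nothing more informative can be an AIB censor. So the whole argument reduces to: $\lbcens$ is an IB censor for $\T$ and $\pol$ (the AIB and QIB properties then being immediate from its definition as the intersection).

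To prove that $\lbcens$ is an IB censor, I must exhibit, for an arbitrary ABox $\A$, an ABox $\A'$ with $\lbcens(\A) = \lbcens(\A')$ and $\T \cup \pol \cup \A'$ consistent (Definition~\ref{def:strict-censor}). The natural candidate is to take $\A'$ to be (an ABox whose $\cll{\GA}{\cdot}$-closure equals) the set of ground atoms in $\lbcens(\A)$, i.e., $\A' = \GA \cap \lbcens(\A)$. Since each $\cens'(\A)$ is contained in $\cll{\CQL}{\A}$, we get $\A' \subseteq \cll{\GA}{\A}$. For consistency of $\T \cup \pol \cup \A'$: each optimal $\cens'(\A)$ equals $\cll{\CQL}{\A'_{\cens'}}$ for an ABox $\A'_{\cens'}$ with $\T \cup \pol \cup \A'_{\cens'}$ consistent (this is exactly part~$(ii)$ of the Claim inside the proof of Theorem~\ref{thm:cq-st2ga-cp}), and the ground atoms of $\A'$ are contained in those of every such $\A'_{\cens'}$; since $\dlliter$ (and more generally the denial-checking machinery) is monotone, a subset of a consistent-with-$(\T\cup\pol)$ ABox is again consistent with $\T \cup \pol$ — so $\T \cup \pol \cup \A'$ is consistent. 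Finally, indistinguishability: I need $\lbcens(\A') = \lbcens(\A)$. Using the Claim again, each optimal IB censor $\cens'$ satisfies $\cens'(\A) = \cll{\CQL}{\A'_{\cens'}}$ and $\cens'$ maps any ABox consistent with $\T\cup\pol$ to its full $\cll{\CQL}{\cdot}$-closure, so one shows $\cens'(\A) = \cens'(\A')$ for every optimal $\cens'$ (both equal the $\CQL$-closure of the same "cleaned" ABox); intersecting over all of them gives $\lbcens(\A) = \lbcens(\A')$.

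Uniqueness then follows cleanly: if $\cens_1$ and $\cens_2$ were two distinct QIB censors, both are AIB censors, hence both $\subseteq \lbcens$ pointwise; but $\lbcens$ is itself an AIB censor, so neither $\cens_1$ nor $\cens_2$ can be a QIB censor unless it equals $\lbcens$ (otherwise $\lbcens$ is a strictly more informative AIB censor, contradicting quasi-optimality). Hence the QIB censor, if it exists, is exactly $\lbcens$, and we have just shown $\lbcens$ exists and is an AIB censor, and is maximal among them — so it is the QIB censor.

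The main obstacle I expect is the indistinguishability step, i.e., verifying $\lbcens(\A) = \lbcens(\A')$ where $\A' = \GA \cap \lbcens(\A)$. The subtlety is that $\lbcens(\A)$ is an \emph{intersection} of closures $\cll{\CQL}{\A'_{\cens'}}$ over possibly infinitely many optimal censors, and one must be careful that the set of ground atoms of this intersection, re-closed under $\cll{\CQL}{\cdot}$, gives back the same intersection — this is where one leans on the fact (implicit in Theorem~\ref{thm:cq-st2ga-cp} and its Claim) that for $\dlliter$ each optimal IB censor's theory is the $\CQL$-closure of a ground-atom set, so intersections behave well with respect to the $\GA$/$\CQL$-closure correspondence. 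Making this bookkeeping precise, rather than any deep new idea, is the crux.
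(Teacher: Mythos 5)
There is a genuine gap, and it sits exactly where you located it: the identification of the QIB censor with the pointwise intersection $\cens_\cap(\A)=\bigcap_{\cens'\in\stcenset{\T,\P}}\cens'(\A)$. That intersection is indeed an upper bound for every AIB censor, but it is in general \emph{not} itself an IB censor, hence not an AIB censor, and the QIB censor can lie strictly below it. The reason is precisely the closure/intersection mismatch you flagged as the crux: an intersection of sets of the form $\cll{\CQL}{\A'_{\cens'}}$ can contain a BCQ with existential variables while containing \emph{no} ground atom supporting it. Concretely, take $\T=\{A\ISA\SOMET{R},\; B\ISA\SOMET{R}\}$, $\P=\{\forall x.\,A(x)\wedge B(x)\rightarrow\bot\}$ and $\A=\{A(c),B(c)\}$. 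Every optimal IB censor returns on $\A$ either $\cll{\CQL}{\{A(c)\}}$ or $\cll{\CQL}{\{B(c)\}}$ (both occur), so $\exists x, y.\,R(x,y)\in\cens_\cap(\A)$ while $\GA\cap\cens_\cap(\A)=\emptyset$. Your witness $\A'=\GA\cap\cens_\cap(\A)=\emptyset$ gives $\cens_\cap(\A')=\emptyset\neq\cens_\cap(\A)$; in fact no witness exists at all, since on any ABox $\A''$ consistent with $\T\cup\P$ all optimal censors return the full closure $\cll{\CQL}{\A''}$, and a nonempty such closure always contains the ground atoms of $\A''$, whereas $\cens_\cap(\A)$ is nonempty yet ground-atom-free. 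So $\cens_\cap$ is not an IB censor, and both the existence and the uniqueness arguments built on it collapse. (In this example the QIB censor actually returns $\emptyset$ on $\A$, strictly less than the set of answers common to all optimal censors; this is consistent with Lemma~\ref{lem:no-secrets}.)

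The paper avoids this trap by never constructing the QIB censor explicitly in this proof. Existence is obtained by noting that the everywhere-empty censor is an AIB censor and appealing to the existence of a maximal one; uniqueness is proved by contradiction: given two distinct QIB censors $\cens_{\mathsf{a}},\cens_{\mathsf{b}}$ disagreeing on some $\A$, with consistent indistinguishable witnesses $\A_1,\A_2$, the set $\cll{\CQL}{\A_1\cup\A_2}$ is still contained in every optimal censor's theory on $\A$ and is consistent with $\T\cup\P$, so patching one of the two censors at $\A$ with this set yields a strictly more informative AIB censor, contradicting quasi-optimality. If you want a constructive handle on the QIB censor, the right object is not the intersection of the optimal theories but the $\CQL$-closure of the set of ground atoms of $\cll{\GA}{\A}$ avoiding all secrets, as in Lemma~\ref{lem:no-secrets}.
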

\begin{proof}
  First, observe that the ``least informative'' censor $\lbcens$ such that $\lbcens(\A)=\emptyset$ for every ABox $\A$, satisfies condition $(i)$ of Definition \ref{def:qos-censor}. So, either $\lbcens$ is a QIB censor (i.e., it satisfies condition $(ii)$ of Definition \ref{def:qos-censor}), or there exists a more informative AIB censor (satisfying such condition $(ii)$). This implies the existence of a QIB censor (we recall that censors can return infinite sets of BCQs).

  Then, let us assume that there exists two distinct QIB censors $\cens_\mathsf{a},\cens_\mathsf{b}$ for $\T$ and $\pol$. Then, there exists an ABox $\A$ such that $\cens_\mathsf{a}(\A)\neq\cens_\mathsf{b}(\A)$. Since $\cens_\mathsf{a},\cens_\mathsf{b}$ are IB censors for $\T$ and $\pol$, let $\A_1$ be an ABox such that $\cens_\mathsf{a}(\A_1)=\cens_\mathsf{a}(\A)$ and $\T\cup\pol\cup\A_1$ is consistent, and let $\A_2$ be an ABox such that $\cens_\mathsf{b}(\A_2)=\cens_\mathsf{b}(\A)$ and $\T\cup\pol\cup\A_2$ is consistent.
  Since $\cens_\mathsf{a},\cens_\mathsf{b}$ are AIB censors for $\T$ and $\pol$, for every $\cens'\in \censet{\T,\P}$, $\A_1 \subseteq \cens'(\A)$ and $\A_2 \subseteq \cens'(\A)$, and therefore, $\cll{\CQL}{\A_1\cup\A_2}\subseteq\cens'(\A)$.
  Now observe that $\cens_\mathsf{a}(\A)\subseteq\cll{\CQL}{\A_1}$, $\cens_\mathsf{b}(\A)\subseteq\cll{\CQL}{\A_2}$ and $\cens_\mathsf{a}(\A)\neq\cens_\mathsf{b}(\A)$. This implies that either $\cens_\mathsf{a}(\A)\subset\cll{\CQL}{\A_1\cup\A_2}$ or $\cens_\mathsf{b}(\A)\subset\cll{\CQL}{\A_1\cup\A_2}$. Let us assume, w.l.o.g., that $\cens_\mathsf{a}(\A)\subset\cll{\CQL}{\A_1\cup\A_2}$. Then, let $\cens_\mathsf{a}'$ be the censor such that $\cens_\mathsf{a}'(\A)=\cll{\CQL}{\A_1\cup\A_2}$ and $\cens_\mathsf{a}'(\A')=\cens_\mathsf{a}(\A')$ for every other ABox $\A'$ different from $\A$. Now, $\cens_\mathsf{a}'$ is an AIB censor for $\T$ and $\pol$, since $\cens_\mathsf{a}$ is an AIB censor for $\T$ and $\pol$ and, as shown above, $\cll{\CQL}{\A_1\cup\A_2}$ is a subset of the theory of every optimal IB censor for $\T$ and $\pol$ over $\A$ , and for the same reason $\cll{\CQL}{\A_1\cup\A_2}$ is consistent with $\T\cup\pol$. Moreover, $\cens_\mathsf{a}'$ is more informative that $\cens_\mathsf{a}$, since $\cens_\mathsf{a}(\A)\subset\cens_\mathsf{a}'(\A)$ (and $\cens_\mathsf{a}(\A')=\cens_\mathsf{a}'(\A')$ for every other ABox $\A'$). Consequently, $\cens_\mathsf{a}$ is not a QIB censor for $\T$ and $\pol$, contradicting the hypothesis. This proves that the QIB censor for $\T$ and $\pol$ is unique.   
\end{proof}

Hereinafter, we denote with $\qibcens_{\T,\pol}$ the QIB censor for $\T$ and $\pol$.
%
%
Entailment of BCQs over QIB censors is then naturally defined as follows.

\begin{definition}
	Let $\T$ be a DL TBox, let $\pol$ be a policy, let $\A$ be an ABox, and let $q$ be a BCQ. $\intEntailment(\T,\P,\A,q)$ is the problem of deciding whether  $q \in \qibcens_{\T,\pol}(\A)$.
\end{definition}

%
We now focus on the case of $\dlliter$ TBoxes and prove that, in this case, entailment of BCQs under QIB censors is FO-rewritable.
Formally, we say that QIB-entailment in a DL $\L$ is FO-rewritable, if for every TBox $\T$ expressed in $\L$, every policy $\P$ and every BCQ $q$, one can effectively compute an FO query $q_r$ such that for every ABox $\A$, $\intEntailment(\T,\P,\A,q)$ is true iff $\A \models q_r$. We call $q_r$ the \emph{QIB-perfect reformulation} of $q$ w.r.t. $\T$ and $\pol$. 

We prove FO-rewritability of entailment of BCQs under QIB censors in $\dlliter$ by exploiting a correspondence 
between this problem and entailment of BCQs under IAR-semantics for DL ontologies, which is indeed FO-rewritable for $\dlliterden$, i.e., $\dlliter$ enriched with denial assertions~\cite{LLRRS15}. We recall that the IAR-semantics is an inconsistency-tolerant semantics that allows for meaningful entailment also when the ABox contradicts the TBox of an ontology (for instance, when $\A=\{A(d),B(d),C(d)\}$ and $\T=\{A\ISA \NOT B\}$). The IAR-semantics is based on the notion of \emph{ABox repair} ($A$-repair), which is a maximal subset of the ABox that is consistent with the TBox (in our example there are two $A$-repairs, $\R_1=\{A(d),C(d)\}$ and $\R_2=\{B(d),C(d)\}$). Then, entailment under IAR-semantics is defined as follows: let $\T$ be a $\dlliterden$ TBox, $\A$ be an ABox, and $q$ be a BCQ, $\iarEntailment(\T,\A,q)$ is the problem of verifying whether $\T \cup \R_{iar}\models q$, where $\R_{iar}$ is the intersection of all A-repairs of $\O=\T \cup \A$, called the IAR-repair of $\O$ (in our example, $\R_{iar}=\{C(d)\}$).

To establish the relationship between QIB-entailment and IAR-entailment, we define secrets, which play in our framework a role similar to minimal inconsistent sets in inconsistency-tolerant query answering~\cite{LLRRS15}.

Let $\T$ be a TBox, let $\pol$ be a policy, and let $\A$ be an ABox. We say that a set of ABox assertions $\S \subseteq \cll{\GA}{\A}$ is a \textit{secret} in $\T \cup \pol \cup \A$, if $\T \cup \pol \cup \S$ is inconsistent and for each assertion $\sigma \in \S$ we have that $\T \cup \pol \cup \S \setminus \{ \sigma \}$ is consistent. We denote by $\secrets(\T,\pol,\A)$ the set of all the secrets in  $\T \cup \pol \cup \A$. 

We now provide the following key property. 

\begin{lemma}\label{lem:no-secrets}	
	Let $\T$ be a DL TBox, let $\pol$ be a policy, let $\A$ be an ABox, and let $q$ be a BCQ. $\intEntailment(\T,\P,\A,q)$ is true iff there exists a $\A' \subseteq \cll{\GA}{\A}$ such that:
	\begin{itemize} \itemsep 0pt
		\item[$(i)$] $\T \cup \A' \models q$;
		\item[$(ii)$] $\A' \cap \S = \emptyset$, for each secret $\S \in\secrets(\T,\pol,\A)$. 
	\end{itemize}  
\end{lemma}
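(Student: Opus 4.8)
The plan is to route the statement through the correspondence of Theorem~\ref{thm:cq-st2ga-cp}, turning a claim about the QIB censor into a claim about the intersection of the ground-atom repairs of the ``ontology'' obtained by reading $\cll{\GA}{\A}$ as an ABox for the extended TBox $\T\cup\pol$; this intersection is exactly $\cll{\GA}{\A}$ with the union of all secrets removed.

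First I would pin down the optimal $\GA$-CP censors pointwise. For a fixed ABox $\A$, I claim $\{\cpcens(\A)\mid\cpcens\in\cpcenset{\GA}{\T,\P}\}$ is precisely the set of $\subseteq$-maximal subsets $\R\subseteq\cll{\GA}{\A}$ with $\T\cup\pol\cup\R$ consistent, equivalently the $\subseteq$-maximal subsets of $\cll{\GA}{\A}$ including no member of $\secrets(\T,\pol,\A)$ (call these the \emph{A-repairs}). One inclusion reuses the argument of Theorem~\ref{thm:onecensor}: an optimal $\GA$-CP censor must return a maximal consistent subset at $\A$, otherwise one enlarges its value at $\A$ alone and stays a CP censor, contradicting optimality; the converse is that any choice function returning one A-repair per ABox is an optimal $\GA$-CP censor. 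Combining this with Theorem~\ref{thm:cq-st2ga-cp} gives, for each $\A$, $\{\cens(\A)\mid\cens\in\stcenset{\T,\P}\}=\{\cll{\CQL}{\R}\mid\R\text{ an A-repair}\}$. Let $\R_{\mathit{iar}}$ be the intersection of all A-repairs of $(\T\cup\pol)\cup\cll{\GA}{\A}$; the standard observation that an assertion lies in every maximal consistent subset iff it lies in no minimal inconsistent one yields $\R_{\mathit{iar}}=\cll{\GA}{\A}\setminus\bigcup_{\S\in\secrets(\T,\pol,\A)}\S$. I will also use that $\cll{\GA}{\cdot}$ is idempotent and that every A-repair, hence $\R_{\mathit{iar}}$, is closed under ground-atom consequences (if $\alpha\in\cll{\GA}{\R}\setminus\R$ then $\R\cup\{\alpha\}$ is model-equivalent to $\R$, so maximality and $\alpha\in\cll{\GA}{\A}$ force $\alpha\in\R$).

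The heart of the argument is then $\qibcens_{\T,\pol}(\A)=\cll{\CQL}{\R_{\mathit{iar}}}$. For ``$\supseteq$'' I would exhibit the censor $\cens^{\star}$ with $\cens^{\star}(\B)=\cll{\CQL}{\R_{\mathit{iar}}(\B)}$, where $\R_{\mathit{iar}}(\B)$ is the intersection of the A-repairs of $(\T\cup\pol)\cup\cll{\GA}{\B}$, and check it is an AIB censor: it sits below every optimal IB censor pointwise because $\R_{\mathit{iar}}(\B)$ is contained in every A-repair, and it is an IB censor because the ABox $\R_{\mathit{iar}}(\B)$ is $\pol$-consistent, ground-saturated, and indistinguishable from $\B$ w.r.t.\ $\cens^{\star}$. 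By uniqueness of the QIB censor, $\cens^{\star}(\B)\subseteq\qibcens_{\T,\pol}(\B)$ for all $\B$. For ``$\subseteq$'', fix $\A$ and take an indistinguishability witness $\B'$ for $\qibcens_{\T,\pol}$ at $\A$ (so $\qibcens_{\T,\pol}(\A)=\qibcens_{\T,\pol}(\B')$ and $\T\cup\pol\cup\B'$ is consistent). Applying $\cens^{\star}\subseteq\qibcens_{\T,\pol}$ at $\B'$ gives $\cll{\CQL}{\B'}=\cll{\CQL}{\R_{\mathit{iar}}(\B')}=\cens^{\star}(\B')\subseteq\qibcens_{\T,\pol}(\B')=\qibcens_{\T,\pol}(\A)$, and together with $\qibcens_{\T,\pol}(\A)\subseteq\cll{\CQL}{\B'}$ this forces $\qibcens_{\T,\pol}(\A)=\cll{\CQL}{\B'}$. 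Since $\qibcens_{\T,\pol}$ is AIB, $\cll{\CQL}{\B'}\subseteq\cll{\CQL}{\R}$ for every A-repair $\R$; projecting on ground atoms, $\cll{\GA}{\B'}\subseteq\cll{\GA}{\R}=\R$ for every A-repair $\R$, hence $\cll{\GA}{\B'}\subseteq\R_{\mathit{iar}}$, and therefore $\qibcens_{\T,\pol}(\A)=\cll{\CQL}{\B'}=\cll{\CQL}{\cll{\GA}{\B'}}\subseteq\cll{\CQL}{\R_{\mathit{iar}}}$.

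Finally I would unwind the equality into the two displayed conditions: a set $\A'\subseteq\cll{\GA}{\A}$ satisfies $\A'\cap\S=\emptyset$ for every $\S\in\secrets(\T,\pol,\A)$ exactly when $\A'\subseteq\R_{\mathit{iar}}$, so a BCQ $q$ belongs to $\cll{\CQL}{\R_{\mathit{iar}}}$ iff $\T\cup\R_{\mathit{iar}}\models q$ iff there is such an $\A'$ with $\T\cup\A'\models q$ (take $\A'=\R_{\mathit{iar}}$ in one direction, monotonicity in the other) --- which is precisely $(i)$ and $(ii)$. I expect the ``$\subseteq$'' direction to be the real obstacle: one must rule out that $\qibcens_{\T,\pol}(\A)$ coincides with the generally strictly larger naive intersection $\bigcap_{\cens\in\stcenset{\T,\P}}\cens(\A)$ of all optimal IB theories, i.e.\ the BCQs \emph{certain} over all A-repairs. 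The point is that, unlike that intersection, $\qibcens_{\T,\pol}(\A)$ must be realizable as the theory $\cll{\CQL}{\B'}$ of a single $\pol$-consistent ABox, and such a $\B'$ can only carry ground atoms already in $\R_{\mathit{iar}}$ --- this is where the fact that membership of a ground atom in all A-repairs is decided atom by atom (so that for instance queries the ``all-repairs'' and ``intersection-of-repairs'' notions agree, while for conjunctive queries they need not) does the work.
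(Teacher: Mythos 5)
Your proof is correct, but it takes a genuinely different route from the paper's. You first pin down, via Theorem~\ref{thm:cq-st2ga-cp}, the pointwise characterization $\{\cens(\A)\mid\cens\in\stcenset{\T,\P}\}=\{\cll{\CQL}{\R}\mid \R$ a maximal subset of $\cll{\GA}{\A}$ consistent with $\T\cup\pol\}$, then prove the identity $\qibcens_{\T,\pol}(\A)=\cll{\CQL}{\R_{iar}}$ and read the lemma off it; in effect you establish Theorem~\ref{thm-iar-qos} first and obtain Lemma~\ref{lem:no-secrets} as a corollary, inverting the paper's order (the paper argues directly, atom by atom, that a ground atom survives in every optimal IB theory iff it lies in no secret, and constructs ad hoc optimal censors to derive the needed contradictions, packaging the IAR correspondence only afterwards). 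Your route buys a sharper intermediate statement --- an explicit description of all optimal IB theories and of the QIB theory as a single deductive closure --- and your closing remark correctly isolates the crux: the QIB theory must be realizable as $\cll{\CQL}{\B'}$ for a single $\pol$-consistent ABox $\B'$, which forces its ground atoms into $\R_{iar}$ and is exactly what separates it from the generally larger certain-answer intersection $\bigcap_{\cens\in\stcenset{\T,\P}}\cens(\A)$ (whence coNP-hardness of \IBEntailment versus $\ACz$ membership here). One step is stated more strongly than it is justified: uniqueness of the QIB censor alone does not yield $\cens^{\star}(\B)\subseteq\qibcens_{\T,\pol}(\B)$ for every $\B$, since ``no AIB censor is more informative'' is a maximality condition in a partial order, not a greatest-element claim. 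You need the (true) fact that the QIB censor pointwise dominates every AIB censor, which follows by rerunning the join construction from the paper's uniqueness proof with $\cens_{\mathsf{a}}=\cens^{\star}$ and $\cens_{\mathsf{b}}=\qibcens_{\T,\pol}$: maximality of $\qibcens_{\T,\pol}$ forces $\qibcens_{\T,\pol}(\B)=\cll{\CQL}{\A_1\cup\A_2}\supseteq\cens^{\star}(\B)$. With that small patch the argument is complete.
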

\begin{proof}
  ($\Leftarrow$). We first show that given an ABox assertion $\alpha \in \cll{\GA}{\A}$, there exists an optimal IB censor $\cens \in \stcenset{\T,\P}$ such that $\alpha \not \in \cens(\A)$ only if there exists a secret $\S$ in $\secrets(\T,\pol,\A)$ such that $\alpha \in \S$.  Suppose, by way of contradiction, that $\alpha$ does not belong to any secret in $\secrets(\T,\pol,\A)$. This means that $\cens(\A) \cup \P \cup \{\alpha \}$ is still consistent
  and so $\cens$ is not optimal, from which the contradiction. Now, suppose that there exists an ABox $\A' \subseteq \cll{\GA}{\A}$ such that: $(i)$ $\T \cup \A' \models q$; and $(ii)$ there is no secret $\S$ in $\secrets(\T,\pol,\A)$ such that $\A' \cap \S \neq \emptyset$. From what shown above and from condition $(ii)$, we have that $\A' \subseteq \cens(\A)$ for every $\cens \in \stcenset{\T,\P}$. This means that $\A' \subseteq \qibcens_{\T,\pol}(\A)$. 
	Moreover, since $\T \cup \A' \models q$, we have that 
	$\T \cup \qibcens_{\T,\pol}(\A) \models q$, which shows the thesis.   

	($\Rightarrow$). Suppose that $\intEntailment(\T,\P,\A,q)$ is true. This means that 
	$q \in \qibcens_{\T,\pol}(\A)$. Since 
	$\qibcens_{\T,\pol}$ is an IB censor, then there exists an ABox $\A'$ such that  
	$\qibcens_{\T,\pol}(\A) = \qibcens_{\T,\pol}(\A')$ 
	and $\T \cup \P \cup \A'$ is consistent (that is, $\A$ and $\A'$ are indistinguishable w.r.t.\ $\qibcens_{\T,\pol}$). 
	Hence, $\T \cup \A' \models q$. Moreover 
	$\A' \subseteq \qibcens_{\T,\pol}(\A)$ and thus $\A' \subseteq \cll{\GA}{\A}$. So, $\A'$ satisfies condition $(i)$ of the lemma. As for condition $(ii)$ we proceed towards a contradiction. Suppose that there exists an ABox assertion $\alpha \in \A'$ and a secret $\S \in \secrets(\T,\pol,\A)$ such that $\alpha \in  \S$. From Definition~\ref{def:qos-censor}, we have that $\A' \subseteq \cens'(\A)$ for every $\cens' \in \stcenset{\T,\P}$, and so, $\alpha \in \cens'(\A)$ for every $\cens' \in \stcenset{\T,\P}$. Since $\S \setminus \{ \alpha \}$ is consistent with $\T \cup \pol$, we have that  $\S \setminus \{ \alpha \}$ is not a secret in $\T \cup \pol \cup \A$. So it is possible to define an optimal IB censor whose theory contains $\S \setminus \{ \alpha \}$,
which is a contradiction, and so $\A'$ satisfies condition $(ii)$ too. 
\end{proof}

The following theorem establishes the relationship between QIB-entailment and IAR-entailment.

\begin{theorem}
 \label{thm-iar-qos}
Let $\T$ be a $\dlliter$ TBox, let $\pol$ be a policy, let $\A$ be an ABox, and let $q$ be a BCQ. $\intEntailment(\T,\P,\A,q)$ is true iff $\iarEntailment(\T \cup \P,\cll{\GA}{\A},q)$ is true.
\end{theorem}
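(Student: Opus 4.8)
The plan is to prove the equivalence by unpacking both sides via structural characterizations and matching them up. On the QIB side, Lemma~\ref{lem:no-secrets} already reduces $\intEntailment(\T,\P,\A,q)$ to the existence of a subset $\A' \subseteq \cll{\GA}{\A}$ that entails $q$ together with $\T$ and that avoids every secret in $\secrets(\T,\pol,\A)$. On the IAR side, $\iarEntailment(\T \cup \P, \cll{\GA}{\A}, q)$ asks whether $\T \cup \P \cup \R_{iar} \models q$, where $\R_{iar}$ is the intersection of all $A$-repairs of the ontology $(\T \cup \P) \cup \cll{\GA}{\A}$; since BCQ entailment is monotone, this is equivalent to the existence of a subset $\A'' \subseteq \R_{iar}$ with $\T \cup \P \cup \A'' \models q$, and because the secrets are over $\GA$ and $q$ is a BCQ, the policy denials in $\T \cup \P$ play no positive role in deriving $q$ from ground atoms, so this is the same as $\T \cup \A'' \models q$ with $\A'' \subseteq \R_{iar}$.

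\medskip

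\noindent The crux is therefore to show that $\R_{iar}$, the IAR-repair of $(\T \cup \P) \cup \cll{\GA}{\A}$, coincides with $\{\alpha \in \cll{\GA}{\A} \mid \alpha \text{ belongs to no secret in } \secrets(\T,\pol,\A)\}$. First I would argue that the $A$-repairs of $(\T \cup \P) \cup \cll{\GA}{\A}$ are exactly the maximal subsets of $\cll{\GA}{\A}$ consistent with $\T \cup \P$ --- here one uses that $\T \cup \A$ (and hence $\cll{\GA}{\A}$ with $\T$) is consistent by our standing assumption, so the only source of inconsistency is the policy, whose violations are precisely captured by the secrets (which are defined to be the \emph{minimal} subsets of $\cll{\GA}{\A}$ inconsistent with $\T \cup \P$). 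Then a set $R \subseteq \cll{\GA}{\A}$ is consistent with $\T \cup \P$ iff it contains no secret, and an atom $\alpha$ lies in the intersection of all maximal such $R$ iff $\alpha$ is not in any secret: if $\alpha$ is in some secret $\S$, then $(\cll{\GA}{\A} \setminus \S) \cup (\S \setminus \{\alpha\})$ extends to a repair omitting $\alpha$ (using minimality of $\S$, i.e. that $\S \setminus \{\alpha\}$ is consistent with $\T \cup \P$); conversely, if $\alpha$ is in no secret, then every secret survives removal-of-nothing-involving-$\alpha$, so $\alpha$ can be added to any consistent subset, whence $\alpha$ lies in every maximal one. This yields $\R_{iar} = \{\alpha \mid \alpha \text{ in no secret}\}$.

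\medskip

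\noindent With this identity in hand, the two characterizations line up directly: the ``$\A'$ avoiding all secrets'' of Lemma~\ref{lem:no-secrets} is exactly a subset of $\R_{iar}$, and the $q$-entailment conditions match modulo the harmless presence of $\P$ noted above. So $\intEntailment(\T,\P,\A,q)$ holds iff some subset of $\R_{iar}$ entails $q$ with $\T$, iff $\T \cup \P \cup \R_{iar} \models q$, iff $\iarEntailment(\T \cup \P, \cll{\GA}{\A}, q)$ holds.

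\medskip

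\noindent The main obstacle I anticipate is making the ``policy denials contribute nothing to BCQ entailment'' step fully rigorous: one must verify that for a BCQ $q$ and a set $G$ of ground atoms consistent with $\T \cup \P$, $\T \cup \P \cup G \models q$ iff $\T \cup G \models q$. The forward direction is the delicate one; it follows because $\T \cup \P \cup G$ being consistent means the denials in $\P$ are already satisfied, so the models of $\T \cup G$ that matter coincide with those of $\T \cup \P \cup G$ on the relevant vocabulary --- but one should spell out that denials, being negative (head $\bot$) constraints, only ever \emph{remove} models and never add consequences once consistency is granted. Secondarily, one must be careful that secrets are taken over $\cll{\GA}{\A}$ rather than $\A$ itself, so that the correspondence with $A$-repairs of $(\T \cup \P) \cup \cll{\GA}{\A}$ (rather than of $\T \cup \P \cup \A$) is the right one; this is consistent with how IAR-entailment is invoked in the statement, but it is the kind of detail where an off-by-one in the choice of ABox would break the argument.
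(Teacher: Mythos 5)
Your proof follows essentially the same route as the paper's: invoke Lemma~\ref{lem:no-secrets}, observe that (since $\T\cup\A$ is consistent) the secrets are exactly the minimal subsets of $\cll{\GA}{\A}$ inconsistent with $\T\cup\P$, and hence that the IAR-repair of $(\T\cup\P)\cup\cll{\GA}{\A}$ is the set of atoms belonging to no secret, after which the two characterizations match by monotonicity. The additional points you flag --- the intersection-of-repairs versus atoms-in-no-minimal-inconsistent-set identity, and the fact that denials contribute no positive BCQ consequences once consistency is granted (standard via the canonical model of $\T\cup G$ in $\dlliter$) --- are correct fillings-in of steps the paper leaves implicit.
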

\begin{proof}
  Since $\T\cup\A$ is consistent, then the secrets in $\T\cup\P\cup\cll{\GA}{\A}$ coincide with the minimal subsets of $\cll{\GA}{\A}$ that are inconsistent with $\T\cup\P$. Therefore, the IAR-Repair $\R$ of $\tup{\T\cup\P,\cll{\GA}{\A}}$ is the set of ground atoms from $\cll{\GA}{\A}$ that do not belong to any secret in $\T\cup\P\cup\cll{\GA}{\A}$. Thus, from Lemma \ref{lem:no-secrets} the thesis follows.
\end{proof}

Theorem~\ref{thm-iar-qos} actually states that, to solve QIB-entailment, we can resort to the query rewriting techniques used to establish IAR-entailment given in \cite{LLRRS15}, provided that we compute $\cll{\GA}{\A}$.
We recall that query entailment under IAR-semantics in a DL $\L$ is FO-rewritable, if for every TBox $\T$ expressed in $\L$ and every BCQ $q$, one can effectively compute an FO query $q_r$ such that for every ABox $\A$, $\iarEntailment(\T,\A,q)$ is true iff $\A \models q_r$. The query $q_r$ is called the \emph{IAR-perfect reformulation} of $q$ w.r.t. $\T$. 

To establish FO-rewritability of QIB-entailment in $\dlliter$, however, we still need to address the above mentioned computation of $\cll{\GA}{\A}$, and turn it into an additional query reformulation step. To this aim, we can exploit the fact that, for a $\dlliterden$ ontology $\T \cup \A$, an FO query $q$ evaluates to true over $\cll{\GA}{\A}$  iff $q'$ evaluates to true over $\A$, where $q'$ is obtained by suitably rewriting each atom of $q$ according to the positive inclusions of $\T$. Intuitively, in this way we cast into the query all the possible causes of the facts that are contained in the closure of the ABox w.r.t.\ the TBox (similarly to what is done in query rewriting algorithms for \dllite \cite{CDLLR07}).



To compute such a query $q'$, we use the function $\atomrewr(q,\T)$, which substitutes each atom $\alpha$ of $q$ with the formula $\phi(\alpha)$ defined as follows (where $A,B$ are atomic concepts and $R,S$ are atomic roles):

\vspace*{-2mm}

\begin{small}
\[
\begin{array}{l}
  \phi(A(t))=\bigvee_{\T\models B \ISA A}B(t) \vee \bigvee_{\T\models \exists R \ISA A}(\exists x. R(t,x))\vee\\[2mm]
  \qquad\qquad\;\;\; \bigvee_{\T\models \exists R^{-} \ISA A}(\exists x. R(x,t)) \\
\\
\phi(R(t_1,t_2))=\bigvee_{\T\models S \ISA R}S(t_1,t_2) \vee \bigvee_{\T\models S^{-} \ISA R}S(t_2,t_1)
\end{array}
\]
\end{small}

For example, if $\T=\{A\ISA C, B \ISA C\}$ and $q=\exists x,y \per C(x) \land P(x,y)$, then  
$\atomrewr(q,\T)$ returns the query $q=\exists x,y \per (C(x) \lor A(x) \lor B(x)) \land P(x,y)$.

The following lemma, whose proof can be immediately obtained from the definitions of $\cll{\GA}{\cdot}$ and $\atomrewr(\cdot,\cdot)$, states the property we are looking for.
\begin{lemma}
\label{lem-atomrewr}
Let $\T$ be a $\dlliterden$ TBox, let $\A$ be an ABox, and let $q$ be an FO sentence. Then $\eval(q,\cll{\GA}{\A})=\eval(\atomrewr(q,\T),\A)$.
\end{lemma}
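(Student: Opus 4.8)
The plan is a straightforward structural induction on the FO sentence $q$, which I would state (as is usual for formulas with quantifiers) as an induction on arbitrary FO formulas $\psi$ over the signature of $\T$ together with an arbitrary assignment $\nu$ of the free variables of $\psi$ to constants: I want to show that $\I_{\cll{\GA}{\A}},\nu \models \psi$ iff $\I_\A,\nu \models \atomrewr(\psi,\T)$, where $\I_\A$ (resp.\ $\I_{\cll{\GA}{\A}}$) denotes the relational structure of the ABox $\A$ (resp.\ of $\cll{\GA}{\A}$). Assuming, in line with the paper's conventions, that $\T\cup\A$ is consistent (this is also what guarantees that $\cll{\GA}{\A}$ is finite), we have $\A\subseteq\cll{\GA}{\A}$ and every atom of $\cll{\GA}{\A}$ uses only constants occurring in $\A$ (the $\dlliter$ chase introduces no named individuals, and the denials in $\T$ do not contribute to positive ground-atom entailment when $\T\cup\A$ is consistent); hence $\I_\A$ and $\I_{\cll{\GA}{\A}}$ share the same active domain. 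This equality of domains, together with the fact that $\atomrewr(\cdot,\T)$ leaves all Boolean connectives and quantifiers of $q$ in place and only rewrites atoms, makes the inductive steps for $\neg$, $\wedge$, $\vee$, $\exists x$, $\forall x$ completely mechanical: each follows by applying the induction hypothesis to the immediate subformula(e) and noting that $\atomrewr$ commutes with the connective in question (renaming, if necessary, the existential variable introduced inside $\phi(A(t))$ away from the variables of $q$).

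All the content therefore sits in the base case, i.e.\ in checking, for a single atom $\alpha$, that $\I_{\cll{\GA}{\A}},\nu\models\alpha$ iff $\I_\A,\nu\models\phi(\alpha)$. I would reduce this to the standard characterisation of ground-atom entailment in $\dlliter$~\cite{CDLLR07}: for constants $a,b$ occurring in $\A$, (i) $\T\cup\A\models A(a)$ iff $B(a)\in\A$ for some atomic concept $B$ with $\T\models B\ISA A$, or $R(a,c)\in\A$ for some atomic role $R$ and some constant $c$ with $\T\models\SOMET{R}\ISA A$, or $R(c,a)\in\A$ for some atomic role $R$ and some constant $c$ with $\T\models\SOMET{\INV{R}}\ISA A$; and (ii) $\T\cup\A\models R(a,b)$ iff $S(a,b)\in\A$ for some atomic role $S$ with $\T\models S\ISA R$, or $S(b,a)\in\A$ for some atomic role $S$ with $\T\models S^{-}\ISA R$. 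These disjuncts are literally the disjuncts of $\phi(A(t))$ and $\phi(R(t_1,t_2))$ in the definition of $\atomrewr$: an atom belongs to $\cll{\GA}{\A}$ precisely when one of its ``causes'' is already in $\A$, and the precomputed relations $\T\models B\ISA A$, $\T\models\SOMET{R}\ISA A$, $\T\models S\ISA R$, etc.\ already absorb all transitive reasoning along the concept- and role-inclusion hierarchies (in particular, $S\ISA R$ entails $\SOMET{S}\ISA\SOMET{R}$ and $\SOMET{\INV{S}}\ISA\SOMET{\INV{R}}$). Note that the witness constants $c$ in (i) lie in the active domain of $\A$, which is what makes the existential disjuncts $\exists x.\,R(t,x)$ and $\exists x.\,R(x,t)$ of $\phi(A(t))$ evaluate correctly over $\I_\A$.

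I expect the only delicate point to be a precise statement and a short justification of the characterisation used in the base case --- in particular the observation that, in the $\dlliter$ chase of $\T\cup\A$, no concept atom about a named individual and no role atom between two named individuals can be produced other than by a single ``unfolding step'' over the (transitively closed) positive inclusions applied to an assertion already in $\A$; this is a routine property of the canonical model of a $\dlliter$ ontology~\cite{CDLLR07}. Everything else --- the induction on the structure of $q$, the handling of assignments, and the alignment of the two active domains --- is bookkeeping that follows directly from the definitions of $\cll{\GA}{\cdot}$ and $\atomrewr(\cdot,\cdot)$, as stated.
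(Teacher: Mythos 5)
Your proof is correct and is precisely the argument the paper leaves implicit: the paper gives no proof of Lemma~\ref{lem-atomrewr}, merely asserting that it follows immediately from the definitions of $\cll{\GA}{\cdot}$ and $\atomrewr(\cdot,\cdot)$. Your structural induction --- with the base case resting on the standard characterisation of ground-atom entailment over consistent $\dlliter$ ontologies, and with the two supporting observations that actually carry the argument (the active domains of $\A$ and $\cll{\GA}{\A}$ coincide, which is what makes the negation and quantifier cases sound, and the entailed inclusions $\T\models B\ISA A$, $\T\models \SOMET{R}\ISA A$, $\T\models S\ISA R$ already absorb all chains of inclusions, including those passing through existential restrictions, so a single unfolding step over $\A$ suffices) --- fills in exactly what is needed.
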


We are now able to extablish FO-rewritability of QIB-entailment.

\begin{theorem}
Let $\T$ be a $\dlliter$ TBox, let $\pol$ be a policy, let $q$ be a BCQ, and let $q_r$ be an FO sentence that is a IAR-perfect reformulation of $q$ w.r.t.\ the $\dlliterden$ TBox $\T \cup \pol$. Then, the FO sentence $\atomrewr(q_r,\T)$ is a QIB-perfect reformulation of $q$ w.r.t.\ $\T$ and $\P$.
\end{theorem}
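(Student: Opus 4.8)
The plan is to establish the equivalence ABox by ABox, composing three facts that are already available and that require no new argument: (a)~Theorem~\ref{thm-iar-qos}, which says that QIB-entailment of $q$ from $\T$, $\P$ and $\A$ coincides with IAR-entailment of $q$ from the $\dlliterden$ ontology $(\T \cup \pol) \cup \cll{\GA}{\A}$; (b)~the defining property of the IAR-perfect reformulation $q_r$ of $q$ w.r.t.\ the $\dlliterden$ TBox $\T \cup \pol$, namely that for \emph{every} ABox $\B$, $\iarEntailment(\T \cup \pol, \B, q)$ holds iff $\B \models q_r$; and (c)~Lemma~\ref{lem-atomrewr}, which turns evaluation over the closure $\cll{\GA}{\A}$ into evaluation of the $\atomrewr(\cdot,\T)$-rewriting over $\A$ itself.

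Carrying this out, I would fix an arbitrary ABox $\A$ (consistent with $\T$, per the standing assumption). By~(a), $\intEntailment(\T,\P,\A,q)$ is true iff $\iarEntailment(\T \cup \pol, \cll{\GA}{\A}, q)$ is true. Since $\cll{\GA}{\A}$ is a finite set of ground atoms, it is itself a legitimate ABox, so I can instantiate~(b) with $\B := \cll{\GA}{\A}$: $\iarEntailment(\T \cup \pol, \cll{\GA}{\A}, q)$ is true iff $\cll{\GA}{\A} \models q_r$, i.e.\ iff $\eval(q_r, \cll{\GA}{\A})$ is true. By~(c), $\eval(q_r, \cll{\GA}{\A}) = \eval(\atomrewr(q_r,\T), \A)$, so this is equivalent to $\A \models \atomrewr(q_r,\T)$. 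Composing the three equivalences yields: $\intEntailment(\T,\P,\A,q)$ is true iff $\A \models \atomrewr(q_r,\T)$, for every ABox $\A$. Since $q_r$ — and hence $\atomrewr(q_r,\T)$ — is effectively computable from $q$ and $\T \cup \pol$, this is precisely the statement that $\atomrewr(q_r,\T)$ is a QIB-perfect reformulation of $q$ w.r.t.\ $\T$ and $\P$.

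I expect the only delicate point — the place where the statement could superficially look off — to be the role of the two different TBoxes. The closure $\cll{\GA}{\A}$ in Theorem~\ref{thm-iar-qos} is taken with respect to the positive part $\T$ alone, not $\T \cup \pol$: denial assertions entail no fresh ground atoms, so the set of atoms is unchanged, whereas closing under $\T \cup \pol$ would be ill-behaved exactly when $\T \cup \pol \cup \A$ is inconsistent. Correspondingly, the rewriting $\atomrewr$ in Lemma~\ref{lem-atomrewr} only consults the positive inclusions of its TBox argument, so $\atomrewr(q_r,\T) = \atomrewr(q_r, \T \cup \pol)$ and the lemma applies verbatim with $\T$ regarded as a (denial-free) $\dlliterden$ TBox. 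Beyond this bookkeeping I anticipate no genuine obstacle: the theorem is a direct composition of Theorem~\ref{thm-iar-qos}, the definition of IAR-perfect reformulation, and Lemma~\ref{lem-atomrewr}.
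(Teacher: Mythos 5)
Your proposal is correct and follows essentially the same route as the paper's proof: both compose Theorem~\ref{thm-iar-qos}, the defining property of the IAR-perfect reformulation instantiated on the ABox $\cll{\GA}{\A}$, and Lemma~\ref{lem-atomrewr} into a chain of equivalences holding for every ABox $\A$. Your closing remark on the bookkeeping between $\T$ and $\T \cup \pol$ is a sound observation that the paper leaves implicit.
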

\begin{proof}
Let the FO sentence $q_r$ be an IAR-perfect reformulation of $q$ w.r.t.\ the $\dlliterden$ TBox $\T \cup \pol$. Then, for every ABox $\A$, $\iarEntailment(\T \cup \P,\cll{\GA}{\A},q)$ is true iff $\eval(q_r,\cll{\GA}{\A})$ is true. Now, from Lemma \ref{lem-atomrewr}, it follows that, for every ABox $\A$, $\eval(q_r,\cll{\GA}{\A})=\eval(\atomrewr(q_r,\T),\A)$. And since by Theorem \ref{thm-iar-qos}, for every ABox $\A$ such that $\T\cup\A$ is consistent, $\iarEntailment(\T \cup \P,\cll{\GA}{\A},q)$ is true iff $\intEntailment(\T,\P,\A,q)$ is true, it follows that the FO sentence $\atomrewr(q_r,\T)$ is a QIB-perfect reformulation of $q$ w.r.t.\ $\T$ and $\P$.
\end{proof}

Since IAR-entailment is actually FO rewritable, as shown in \cite{LLRRS15}, the above theorem proves the FO rewritability of QIB-entailment for $\dlliter$ TBoxes. Moreover, the above theorem identifies a technique for obtaining the QIB-perfect reformulation of a CQ, based on a simple combination of the IAR-perfect reformulation algorithm of \cite{LLRRS15} and the $\atomrewr$ reformulation defined above.
Therefore:

\begin{corollary}
Let $\T$ be a $\dlliter$ TBox, let $\pol$ be a policy, let $\A$ be an ABox, and let $q$ be a BCQ. The problem $\intEntailment(\T,\P,\A,q)$ is in $\ACz$ in data complexity.
\end{corollary}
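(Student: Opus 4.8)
The plan is to obtain this bound as an immediate consequence of the preceding theorem together with the classical fact that, in data complexity, evaluating a fixed first-order sentence over a finite structure is in $\ACz$. In other words, the corollary is nothing more than the combination of \emph{FO-rewritability of QIB-entailment in $\dlliter$} (just established) and \emph{FO model checking over finite structures is in $\ACz$}; there is essentially no new argument to carry out.

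Concretely, I would proceed as follows. Fix a $\dlliter$ TBox $\T$, a policy $\pol$, and a BCQ $q$; these are exactly the parameters that stay constant in the data-complexity measure. First, invoke FO-rewritability of IAR-entailment for $\dlliterden$ \cite{LLRRS15} to effectively compute an IAR-perfect reformulation $q_r$ of $q$ w.r.t.\ the $\dlliterden$ TBox $\T \cup \pol$, and then set $q'_r = \atomrewr(q_r,\T)$. By the preceding theorem, $q'_r$ is a QIB-perfect reformulation of $q$ w.r.t.\ $\T$ and $\P$, i.e., for every ABox $\A$ (which, by the standing assumptions of the framework, is such that $\T \cup \A$ is consistent) we have that $\intEntailment(\T,\P,\A,q)$ is true iff $\A \models q'_r$. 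The key point for data complexity is that $q'_r$ is a fixed FO sentence, depending only on $\T$, $\pol$ and $q$ and not on $\A$, and that the map $q \mapsto q'_r$ is effective and data-independent.

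It then remains to note that deciding $\A \models q'_r$, for the fixed FO sentence $q'_r$ and a varying finite structure $\A$ (viewed as a relational database), is the model-checking problem for first-order logic over finite structures, which is well known to be in $\ACz$ (and hence in \LOGSPACE) in the size of the structure. Composing this with the data-independent reformulation step above yields membership of $\intEntailment(\T,\P,\A,q)$ in $\ACz$ in data complexity.

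I do not expect a genuine obstacle here: all the real work has already been done in Lemma~\ref{lem:no-secrets}, Theorem~\ref{thm-iar-qos}, Lemma~\ref{lem-atomrewr}, the preceding theorem, and the IAR-rewritability result of \cite{LLRRS15}. The only points needing a (routine) check are bookkeeping in nature: that $\atomrewr(q_r,\T)$ is indeed a first-order sentence — which holds because each atom is replaced by a finite disjunction of atoms and existentially quantified atoms determined by the positive inclusions entailed by $\T$ — that the overall construction is effective, and that the ``iff'' of the preceding theorem is applied within its hypothesis (ABoxes consistent with $\T$), which coincides with the standing assumption on $\T \cup \A$ made throughout the paper.
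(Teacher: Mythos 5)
Your proposal is correct and matches the paper's own (implicit) argument exactly: the corollary is presented as an immediate consequence of the preceding FO-rewritability theorem together with the standard fact that evaluating a fixed FO sentence over a finite structure is in $\ACz$ in data complexity. The additional routine checks you mention (effectiveness and data-independence of the reformulation, and the consistency assumption on $\T \cup \A$) are consistent with the paper's standing assumptions.
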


	\section{Conclusions}
\label{sec:conclusions}

In this paper we have studied the approach to controlled query evaluation based on instance indistinguishability: we have applied this approach to Description Logic ontologies, we have studied its relationship with another confidentiality-preserving approach, and we have established complexity results for this form of controlled query evaluation in the case of $\dlliter$ ontologies.

Notably, in this framework we have identified a tractable and semantically well-founded notion of CQE that enjoys the first-order rewritability property. We believe that this result opens the way towards practical implementations of CQE engines for DL ontologies and Ontology-based Data Access. We are currently working to achieve this goal.




Another important future direction is a deeper study of the user model. Our framework inherits from its predecessors a relatively simple model, which assumes that the user knows (at most) the TBox and all the query answers returned by the system, and considers only the \emph{deductive} abilities of the user over such knowledge. This user model might need to be enriched to capture more realistic data protection scenarios.


\section * {Acknowledgements} 
This work was partly supported by EU within the H2020 under grant agreement 834228 (ERC Advanced Grant WhiteMec) and under grant agreement  825333 (MOSAICrOWN), by Regione Lombardia within the Call Hub Ricerca e Innovazione under grant agreement 1175328 (WATCHMAN), and by Sapienza Università di Roma (2019 project CQEinOBDM).  
	
	\bibliographystyle{named}
	\bibliography{medium-string,krdb,w3c,local-bib}
\end{document}